\documentclass{article}

\PassOptionsToPackage{numbers}{natbib} 

\usepackage[preprint]{neurips_2026}

\usepackage{microtype}
\usepackage{graphicx}
\usepackage{subcaption}
\usepackage{booktabs} 
\usepackage{bbm}
\usepackage{multirow}
\usepackage{hyperref}
\usepackage{paralist}

\usepackage{amsmath}
\usepackage{amssymb}
\usepackage{mathtools}
\usepackage{amsthm}
\usepackage[numbers]{natbib} 

\usepackage[capitalize,noabbrev]{cleveref}

\theoremstyle{plain}
\newtheorem{theorem}{Theorem}[section]

\newtheorem{lemma}[theorem]{Lemma}
\newtheorem{corollary}[theorem]{Corollary}
\theoremstyle{definition}

\theoremstyle{remark}

\usepackage[textsize=tiny]{todonotes}

\usepackage[utf8]{inputenc} 
\usepackage[T1]{fontenc}    
\usepackage{hyperref}       
\usepackage{url}            
\usepackage{booktabs}       
\usepackage{amsfonts}       
\usepackage{nicefrac}       
\usepackage{microtype}      
\usepackage{xcolor}         
\usepackage{caption}
\usepackage{wrapfig}
\usepackage{graphicx}
\usepackage{float}
\usepackage{algorithm}
\usepackage{algorithmic}
\title{\textsc{UMEDA}: Unified Multi-modal Efficient Data Fusion for Privacy-Preserving Graph Federated Learning via Spectral-Gated Attention and Diffusion-Based Operator Alignment}

\author{%
  Shih-Yu Lai \\
  National Taiwan University\\
  RIKEN-CCS\\
  Taipei, Taiwan; Kobe, Japan\\
  \texttt{akinesia112@gmail.com} \\
  \And
  Hirozumi Yamaguchi \\
  RIKEN-CCS\\
  The University of Osaka\\
  Osaka/Kobe, Japan\\
  \texttt{hirozumi.yamaguchi@riken.jp} \\
  \AND
  Shang-Tse Chen \\
  National Taiwan University \\
  Taipei, Taiwan \\
  \texttt{stchen@csie.ntu.edu.tw} \\
  \And
  Yu-Lun Liu \\
  National Yang Ming Chiao Tung University \\
  Hsinchu, Taiwan \\
  \texttt{yulunliu@cmlab.csie.ntu.edu.tw} \\
  \And
  Bing-Yu Chen\thanks{Corresponding author.} \\
  National Taiwan University \\
  Taipei, Taiwan \\
  \texttt{robin@ntu.edu.tw} \\
}

\begin{document}

\maketitle

\vspace{-10pt}
\begin{abstract}
Device-free localization trains models from heterogeneous wireless and visual sensors (e.g., Wi-Fi, LiDAR) distributed across edge devices. Federated learning offers a privacy-respecting framework, but is brittle when clients differ in sensor modality and resolution, when their data distributions drift, and when privacy noise destroys the structural signal needed for localization. We propose \textbf{UMEDA}, a graph federated learning framework in which clients form nodes of a global graph that share a continuous integral operator, and aggregation is reformulated as spectral signal processing on this operator. Each client encodes its local sensors with a linear-attention layer whose kernel spectrum is low-rank filtered, suppressing modality-specific residuals so clients with different sensors align in a common low-rank subspace. The server then aggregates client updates via a diffusion model over the kernel's spectral coefficients, treating updates as discretizations of a shared operator rather than topology-bound weights---this absorbs varying graph sizes and missing modalities without node-wise correspondence. To balance privacy and utility, we add an anisotropic differential-privacy mechanism that projects noise preferentially into the null space of the signal subspace, preserving dominant eigendirections while ensuring formal $(\epsilon, \delta)$-DP under gradient clipping. On MM-Fi and the RELI11D out-of-distribution benchmark, UMEDA outperforms state-of-the-art federated baselines in accuracy, convergence, and communication efficiency, particularly under high modality heterogeneity and tight privacy budgets.
\end{abstract}

\begin{wrapfigure}[17]{!t}{0.5\columnwidth}
\centering
\includegraphics[width=0.5\columnwidth]{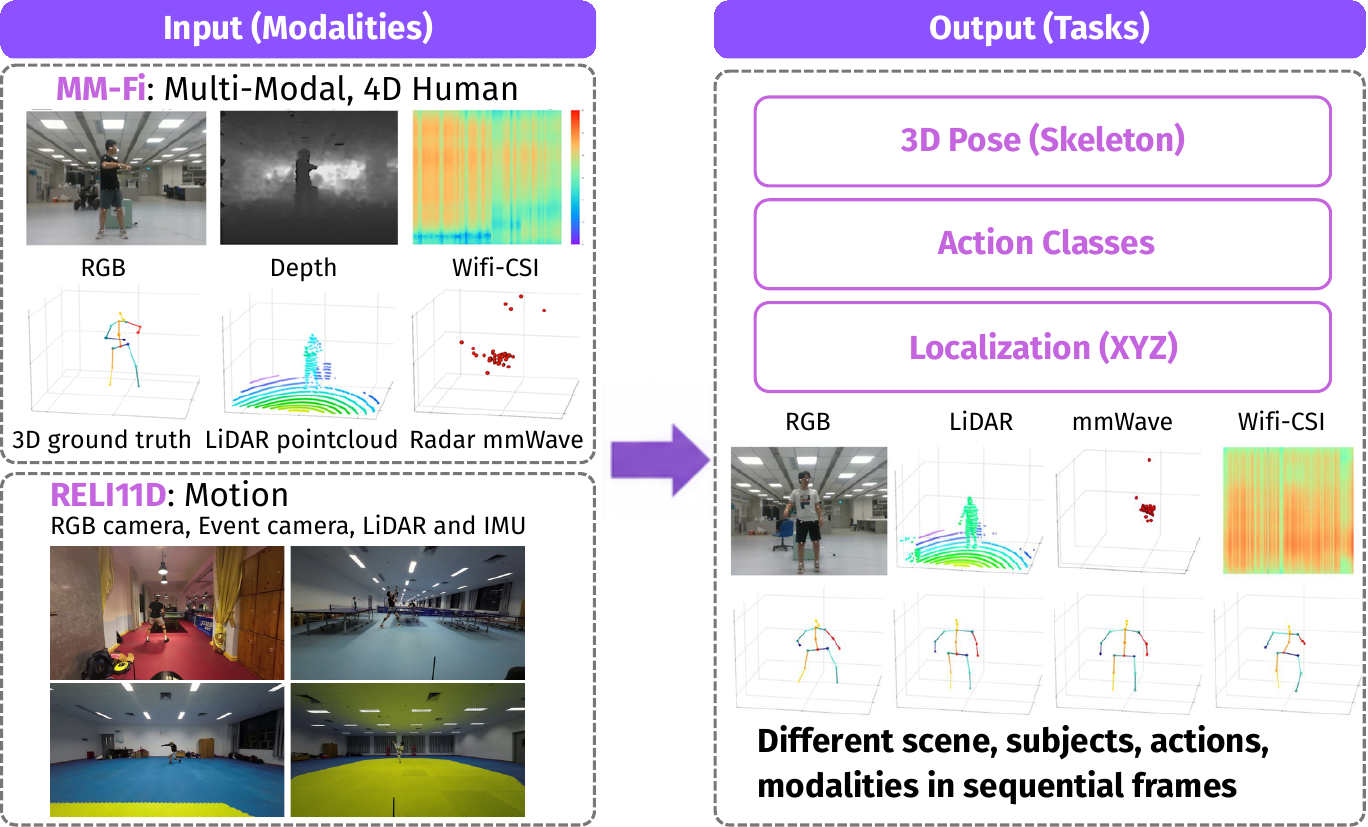}
\caption{Overview of MM-Fi and RELI11D datasets, input modalities, and target tasks for multi-modal human sensing under heterogeneous scenes, subjects, actions, and sensors.}
\label{fig:arch_overview}
\end{wrapfigure}

\newpage
\section{Introduction}
\vspace{-8pt}

Device-free localization (DFL) infers a person's position, pose, and activity from ambient sensor streams—Wi-Fi channel state information~\citep{JiangMobiCom20,Wang2022MetaFiPlusPlus,MFFALoc,LocFree}, millimeter-wave radar~\citep{ChenMM22,LeeWACV23}, and LiDAR or depth point clouds~\citep{Wang2018,YenCVPR24}—without requiring users to wear any device. Fusing these modalities is attractive because each sensor compensates for the others' blind spots~\citep{YangNeurIPS23,AnNeurIPS22,chen2023immfusion}, but it raises a problem that is fundamentally not centralized: the raw streams reveal where a person is, what they are doing, and what their environment looks like, and they are too bandwidth-heavy to ship to a central server. Federated learning (FL)~\citep{McMahan2017FedAvg} is the natural framework, but standard FL does not match three properties of multi-modal sensing.

\textit{Clients have different sensors and different resolutions.} A Wi-Fi client and a LiDAR client do not produce comparable feature tensors, and even within one modality the spatial resolution varies across devices. Generic FL methods such as FedAvg~\citep{McMahan2017FedAvg}, FedProx~\citep{Li2020FedProx}, and SCAFFOLD~\citep{Karimireddy2020SCAFFOLD} aggregate by averaging weights and assume every client trains the same architecture on the same input shape. Graph federated learning (GFL)~\citep{Chen2023FedGraphSurvey,Li2024FederatedGNNs}—where clients are treated as nodes of a global graph and aggregation is structure-aware~\citep{FedGTA,FedTAD,Qiu2022FedGraphNN}—relaxes the homogeneity assumption, but existing GFL methods build the inter-client graph from heuristic similarity scores that become unstable when client A produces Wi-Fi spectrograms and client B produces sparse LiDAR point clouds. There is no representation in which the two clients' updates can be directly compared.

\textit{Client distributions drift.} Different rooms, body sizes, and sensor placements shift the joint distribution of inputs and labels. Optimizer-based fixes~\citep{Li2020FedProx,Karimireddy2020SCAFFOLD,Reddi2021FedOpt,Acar2021FedDyn,Li2021MOON} constrain weight-space updates but do not align the underlying representations. Recent generative FL approaches~\citep{weng2024feddiff,pinaya2024feddm,chen2025fedbip} use diffusion models to synthesize alignment data on the server, but operate in fixed-dimensional Euclidean space and do not transfer to graphs of varying size. Neural operators~\citep{kovachki2021neuraloperator,li2020mgno} provide a resolution-independent function-space alternative, yet have been used for supervised PDE regression rather than for federated distribution alignment.

\textit{Differential privacy is misaligned with the threat model.} In DFL, the server is allowed to know that a client is participating, but should not be able to reconstruct what that client looks like, what action the person performs, or where they are in the room. Standard DP mechanisms~\cite{Geyer2017,Dwork2014DP,Abadi2016MomentsAccountant,andrew2022adaptiveclipping} clip updates and inject isotropic Gaussian noise; this protects all directions equally, including the low-rank directions that carry the localization signal. The result is that strong privacy budgets erase the very structure needed for accurate pose and position estimation.

\textit{Our approach.} The three obstacles share a root cause: weight-space federation presumes clients agree on architecture, input shape, and update geometry—none of which holds here. This raises our central question: \textit{``Can multi-modal sensing be federated in a representation invariant to which sensor each client holds, while preserving formal $(\epsilon,\delta)$-DP guarantees?''} We answer with \textit{UMEDA}, a graph federated learning framework where clients are nodes sharing a continuous integral operator, and aggregation is reformulated as spectral signal processing on that operator. The hypothesis: while sensors discretize the world differently, the underlying physical kernel is shared, so federation should happen in operator space, not weight space. UMEDA contributes:

\begin{itemize}
    \item \textbf{Spectral-Gated Linear Transformer (SGLT).} A linear-attention encoder that performs a learned spectral gate on the singular spectrum of its attention kernel. The gate keeps the low-rank directions that carry shared semantics across clients and attenuates the high-frequency directions where modality-specific noise concentrates, so clients with different sensors are encoded into a common subspace.
    
    \item \textbf{Diffusion-based Graph Neural Operator (Diff-GNO).} A server-side aggregator that views each client's update as a discretization of a shared continuous integral kernel—a graph neural operator—and aggregates via a diffusion model trained on the operator's spectral coefficients. Generative denoising replaces weight averaging, so the consensus is well defined even when clients have different graph sizes or miss whole modalities.
    
    \item \textbf{Subspace-Projected Differential Privacy (SP-DP).} An anisotropic DP mechanism that projects Gaussian noise onto the null space of a public signal subspace before adding it to the clipped client update. This hides client-specific scene appearance, motion, and position with formal $(\epsilon,\delta)$ guarantees, while leaving the dominant eigendirections—those carrying the localization signal—almost untouched.
\end{itemize}

On MM-Fi~\citep{YangNeurIPS23} and the RELI11D out-of-distribution benchmark~\citep{YenCVPR24}, UMEDA outperforms state-of-the-art federated, graph-FL, and DP-FL baselines in accuracy, convergence, and communication, particularly under high modality heterogeneity and tight privacy budgets. Figure~\ref{fig:arch_overview} shows the multi-modal human sensing datasets, input modalities, and target tasks; Figure~\ref{fig:modules} summarizes the end-to-end pipeline and three core components.

\section{Related Work}
\vspace{-8pt}
\textit{Graph learning for heterogeneous signals.}
Unifying multi-modal sensors (LiDAR, Wi-Fi CSI, mmWave) is fundamentally about reconciling \textit{discretization heterogeneity}. Voxelization and multi-view projection discard fine topology; point-based GNNs such as Point-GNN~\cite{Shi2020PointGNN} and Graph Attention Convolution~\cite{GraphAttentionConvolution} build $k$-NN graphs to capture local geometry, but assume homogeneous node features and fail to align modalities of differing densities and noise. Graph Transformers~\cite{Wang2022GTNet,2023SuperpointTransformer} model long-range dependencies but scale quadratically and amplify high-frequency sensor noise. Linear-attention variants (Performer~\cite{choromanski2021performer}, Nystr\"omformer~\cite{xiong2021nystromformer}, Linformer~\cite{wang2020linformer}) reduce cost but lack \textit{spectral filtering} inductive biases—they treat all spectral components equally and cannot separate the shared low-rank semantic subspace (human pose) from modality-specific high-frequency artifacts (sensor noise), motivating an explicit spectral gate. Recent graph generative work further underscores the need to handle variable-size/discrete structures (continuous-time discrete-state graph diffusion~\cite{xu2024dsct_graphdiff}, diffusion-free next-scale generation~\cite{duarte2025magra}).

\textit{Federated learning under non-IID concept drift.}
Optimization-based FL such as FedProx~\cite{Li2020FedProx} and SCAFFOLD~\cite{Karimireddy2020SCAFFOLD} constrains local updates in \emph{weight space} and struggles with the severe concept drift induced by environmental variation in sensing. \textit{Generative FL} trains or leverages diffusion models for server-side alignment~\cite{weng2024feddiff,pinaya2024feddm}, but standard diffusion operates in fixed-dimensional Euclidean space and is ill-suited to graphs whose node count and topology vary across clients—prompting graph-specific formulations~\cite{xu2024dsct_graphdiff}. \textit{Neural operators}~\cite{kovachki2021neuraloperator} provide a resolution-independent alternative by learning maps between function spaces, with graph extensions via GNO~\cite{li2020mgno}; however, GNOs have been used for supervised PDE regression, leaving their potential for \textit{generative distribution alignment} across heterogeneous clients underexplored.

\textit{Privacy--utility trade-offs.}
DP-SGD~\cite{Geyer2017} clips gradients and injects \emph{isotropic} Gaussian noise. In geometric deep learning this is harmful: the structural utility of a graph (e.g., localization geometry) concentrates in the \textit{dominant eigenspace}, so uniform perturbation yields a poor privacy-utility frontier. Topology-aware mechanisms~\cite{Chen2023FedGraphSurvey} adjust noise by edge existence or node degree but target structural anonymity (hiding links) rather than the \textit{semantic utility} of node features. Missing is a mechanism that uses \textit{subspace projection} to confine noise to the signal's null space, masking sensitive updates without destroying the low-rank geometry essential for localization and pose estimation.

\textit{Graph federated learning (GFL) via topology-aware aggregation.}
GFL models clients as nodes of a global graph with edges encoding statistical or physical dependence, replacing parameter averaging with structure-aware fusion~\cite{Chen2023FedGraphSurvey,Li2024FederatedGNNs}. Topology-aware methods such as FedGTA and FedTAD~\cite{FedGTA,FedTAD} learn this inter-client graph to re-weight contributions under non-IID; benchmarks FedGraphNN~\cite{Qiu2022FedGraphNN} and OpenFGL~\cite{Zhang2023OpenFGL} standardize evaluation. These methods rely on \emph{heuristic} similarity metrics that become unstable under \textit{multi-modal heterogeneity} (Client A on Wi-Fi vs.\ Client B on LiDAR) and lack a continuous function-space alignment. UMEDA replaces heuristic graph construction with a \textit{Diffusion-based Graph Neural Operator} that generatively models client relationships, yielding topology-aware aggregation that is robust even with disjoint modalities and varying resolutions.

\section{Method}
\label{sec:method}
\vspace{-8pt}
\begin{figure*}[t]
\centering
\includegraphics[width=\textwidth]{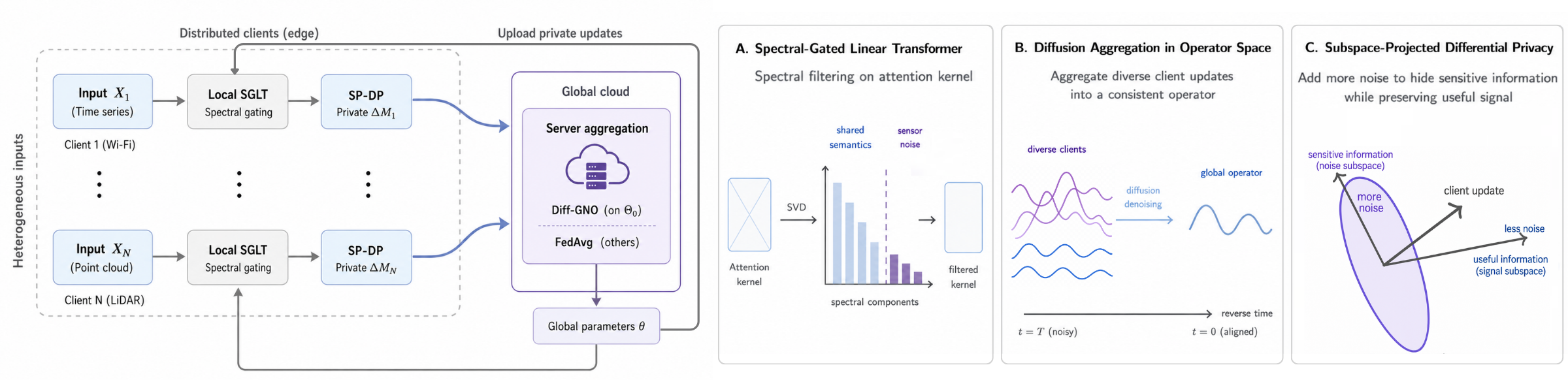}
\caption{Architecture and mechanisms of UMEDA. 
\textbf{Left:} end-to-end pipeline. Heterogeneous clients (Wi-Fi time series, LiDAR point clouds, etc.) apply SGLT to produce kernel updates $\Delta\mathbf{M}_k$, privatize them via SP-DP, and upload to the server, which aggregates with Diff-GNO on $\mathbf{\Theta}_0=\mathrm{vec}(\Delta\tilde{\mathbf{M}})$ and FedAvg on the rest, then broadcasts $\theta$.
\textbf{Right:} \textit{(A) SGLT} retains low-rank shared semantics (blue bars) and suppresses high-frequency sensor noise (purple bars) on the attention kernel.
\textit{(B) Diff-GNO} denoises diverse client operator updates ($t{=}T$, noisy) into a globally aligned operator ($t{=}0$) via reverse-time diffusion.
\textit{(C) SP-DP} adds more noise in the null (sensitive) subspace and less in the signal (useful) subspace, preserving utility under $(\epsilon,\delta)$-DP.}
\label{fig:modules}
\end{figure*}

\textit{Overview.}
We formulate the multi-modal Device-Free Localization (DFL) problem within a GFL framework. Let  $\mathcal{C} = \{1, \dots, N\}$ be the set of clients. Each client $k \in \mathcal{C}$ observes a local dataset $\mathcal{D}_k = \{(\mathbf{X}_k^{(i)}, \mathbf{Y}_k^{(i)})\}_{i=1}^{n_k}$, where $\mathbf{X}_k^{(i)}$ represents heterogeneous sensor inputs (e.g., Wi-Fi CSI, LiDAR point clouds) and $\mathbf{Y}_k^{(i)}$ represents the target labels (e.g., 3D coordinates, action classes).
A fundamental challenge is \textit{discretization heterogeneity}:  $\mathbf{X}_k$ resides in different topological spaces across clients (e.g., $\mathbb{R}^{N_{pts} \times 3}$ for LiDAR vs. $\mathbb{R}^{N_{sub} \times T}$ for Wi-Fi). We propose \textit{UMEDA}, a framework that maps these heterogeneous inputs into a unified \textit{operator space}---formally a Hilbert--Schmidt space of integral operators acting on signals~\citep{kovachki2021neuraloperator}---decoupling the federated representation from the discretization that any single client happens to use. We parameterize the global graph operator as $\mathcal{G}_\theta\in\mathfrak{F}$ with parameters $\theta$, where $\mathfrak{F}$ denotes this Hilbert--Schmidt operator hypothesis space.
The federated objective is:
\begin{equation}
\theta^* \;=\; \arg\min_{\theta}\; \mathbb{E}_{k\sim\mathcal{P}}
\left[\mathcal{L}_k(\mathcal{G}_\theta;\mathcal{D}_k)\right].
\label{eq:fed_obj}
\end{equation}
where $\mathfrak{F}$ is the hypothesis space of graph operators. UMEDA achieves this via three coupled components: (1) Spectral-Gated Linear Transformer (SGLT) for local alignment, (2) Diffusion-based Graph Neural Operator (Diff-GNO) for global aggregation, and (3) Subspace-Projected Differential Privacy (SP-DP).

\subsection{Spectral-Gated Linear Transformer (SGLT)}
\label{sec:sglt}

To reconcile multi-modal heterogeneity, we define a local encoder $\Phi_k$ that maps discrete inputs $\mathbf{X}_k$ to a latent sequence $\mathbf{H}_k \in \mathbb{R}^{L \times d}$. Standard self-attention suffers from quadratic complexity and overfits to high-frequency sensor noise. We propose \textit{SGLT}, which incorporates a low-rank spectral filtering mechanism.

\textit{Linearized Attention via Kernel Approximation.}
Let $\mathbf{Q}, \mathbf{K}, \mathbf{V} \in \mathbb{R}^{L \times d}$ be the query, key, and value matrices. We utilize a linearized attention mechanism via a feature map:
\begin{equation}
\text{SGLT}(\mathbf{Q}, \mathbf{K}, \mathbf{V}) = \phi(\mathbf{Q}) \left( \phi(\mathbf{K})^\top \mathbf{V} \right).
\end{equation}
The term $\mathbf{M} = \phi(\mathbf{K})^\top \mathbf{V} \in \mathbb{R}^{d \times d}$ encapsulates the global semantic structure. In multi-modal settings, modality-specific noise (e.g., Wi-Fi multipath, LiDAR outliers) manifests as high-frequency perturbations in $\mathbf{M}$.

\textit{Spectral Gating Mechanism.}
We compute the SVD 
\begin{equation}
\mathbf{M}=\mathbf{U}\mathbf{\Sigma}\mathbf{W}^\top.
\label{eq:svd}
\end{equation}
During training, we use a differentiable relaxation
\small{
\begin{equation}
g(\sigma_i) \;=\; \mathrm{sigmoid}\!\left(\frac{\sigma_i-\tau}{\beta}\right), 
g(\mathbf{\Sigma}) := \mathrm{diag}(g(\sigma_1),\dots,g(\sigma_d)),
\label{eq:soft_gate}
\end{equation}
}
and reconstruct
\begin{equation}
\hat{\mathbf{M}} \;=\; \mathbf{U}\big(g(\mathbf{\Sigma})\odot \mathbf{\Sigma}\big)\mathbf{W}^\top.
\label{eq:filteredM}
\end{equation}
For analysis and ablations, we also consider the hard threshold gate
$g(\sigma_i)=\mathbb{I}(\sigma_i>\tau)$, which recovers a truncated SVD.

\textit{Output.}
We apply the filtered semantic matrix $\hat{\mathbf{M}}$ with the standard linear-attention normalization:
\begin{equation}
\mathbf{H}' \;=\;
\frac{\phi(\mathbf{Q})\,\hat{\mathbf{M}}}
{\phi(\mathbf{Q})\left(\phi(\mathbf{K})^\top\mathbf{1}\right)+\varepsilon},
\label{eq:sglt_out_norm}
\end{equation}
where $\mathbf{1}\in\mathbb{R}^{L}$ is an all-ones vector and $\varepsilon>0$ ensures stability.

\subsection{Subspace-Projected Differential Privacy (SP-DP)}
\label{sec:sp_dp}

To address the privacy-utility trade-off while ensuring rigorous guarantees, we propose \textit{Safe SP-DP}. Unlike standard DP mechanisms that perturb all parameters isotropically, we decouple the subspace estimation from the private gradient update to avoid privacy leakage through the projection basis.

\textit{Safe Subspace Projection.} 
Instead of using the client's private local sensitivity basis, we leverage the \textit{global} semantic operator from the previous round, $\theta_M^t$, to define the projection manifold. Let $\theta_M^t = \mathbf{U}_{g} \mathbf{\Sigma}_{g} \mathbf{W}_{g}^\top$ be the SVD of the current global semantic operator. We define the public signal projector $\mathbf{P}_{\mathcal{S}} = \mathbf{U}_{g, 1:r} \mathbf{U}_{g, 1:r}^\top$ based on the top-$r$ dominant eigenspace.

The client's update $\Delta\mathbf{M}_k$ is privatized in three steps. First, let $\Delta\mathbf{m} := \mathrm{vec}(\Delta\mathbf{M}_k) \in \mathbb{R}^{d^2}$. We bound the update sensitivity via clipping:
\begin{equation}
\Delta\bar{\mathbf{m}}
\;=\;
\Delta\mathbf{m}\cdot \min\left\{1,\frac{C}{\|\Delta\mathbf{m}\|_2}\right\},
\label{eq:clip_vec}
\end{equation}
where $C$ is the clipping threshold. Second, we inject anisotropic Gaussian noise projected onto the global signal/null subspaces:
\begin{equation}
    \Delta \tilde{\mathbf{m}} = \Delta\bar{\mathbf{m}} + (\mathbf{P}_{\mathcal{S}} \otimes \mathbf{I}) \mathbf{z}_{\mathrm{sig}} + (\mathbf{I} - \mathbf{P}_{\mathcal{S}} \otimes \mathbf{I}) \mathbf{z}_{\mathrm{null}},
    \label{eq:spdp_vec}
\end{equation}
where $\mathbf{z}_{\mathrm{sig}} \sim \mathcal{N}(0, \sigma_{\mathrm{sig}}^2 \mathbf{I})$ and $\mathbf{z}_{\mathrm{null}} \sim \mathcal{N}(0, \sigma_{\mathrm{null}}^2 \mathbf{I})$ are independent noise vectors. Finally, to ensure $(\epsilon, \delta)$-differential privacy, we calibrate the base noise scale $\sigma_{\mathrm{sig}}$ according to the Gaussian mechanism:
\begin{equation}
\sigma_{\mathrm{sig}} \;\ge\; \frac{C\sqrt{2\ln(1.25/\delta)}}{\epsilon}.
\label{eq:dp_calib}
\end{equation}
Since $\mathbf{P}_{\mathcal{S}}$ is derived strictly from public broadcast information ($\theta_M^t$) and $\sigma_{\mathrm{null}} \ge \sigma_{\mathrm{sig}}$, the projection acts as post-processing (or adds strictly more noise in the null space), thereby preserving the formal privacy guarantee established by $\sigma_{\mathrm{sig}}$.

\subsection{Diffusion-based Graph Neural Operator}
\label{sec:diff_gno}

Standard federated aggregation (e.g., FedAvg) often fails under non-IID drift. To address this, we propose the \textit{Diffusion-based Graph Neural Operator (Diff-GNO)}, which reformulates aggregation as learning a generative distribution over client operators.

However, applying diffusion directly to the full operator update $\Delta \mathbf{M} \in \mathbb{R}^{d \times d}$ suffers from the curse of dimensionality, particularly when the number of participating clients is small ($|\mathcal{S}_t| \ll d^2$). To make Diff-GNO computationally tractable and statistically robust, we implement it via \textit{Spectral-Latent Diffusion}, which performs generative aggregation solely on the low-dimensional spectral signature of the updates.

\textit{Spectral Projection.}
Given the privatized update $\Delta \tilde{\mathbf{M}}_k$, we first project it onto the global basis $\mathbf{U}_g$ and $\mathbf{W}_g$ (computed from the global operator $\theta_M^t$) to obtain the spectral coefficient matrix $\mathbf{\Sigma}^{(k)} \in \mathbb{R}^{r \times r}$:
\begin{equation}
    \mathbf{\Sigma}^{(k)} = \mathbf{U}_{g, 1:r}^\top (\Delta \tilde{\mathbf{M}}_k) \mathbf{W}_{g, 1:r}.
\end{equation}
We define the diffusion state as the flattened spectral coefficients $\mathbf{\Theta}_0^{(k)} := \mathrm{vec}(\mathbf{\Sigma}^{(k)}) \in \mathbb{R}^{r^2}$. By choosing $r \ll d$ (e.g., $r=16, d=256$), the dimensionality is reduced by orders of magnitude ($65,536 \to 256$), allowing the score model to capture the complex distribution of client updates even with limited samples.

\textit{Forward VE-SDE.}
We adopt a variance-exploding diffusion on the operator-update space:
\begin{equation}
d\mathbf{\Theta}_t = g(t)\, d\mathbf{W}_t,\qquad t\in[0,1],
\label{eq:ve_forward}
\end{equation}
which yields $\mathbf{\Theta}_t=\mathbf{\Theta}_0+\sigma(t)\boldsymbol{\epsilon}$ with
$\boldsymbol{\epsilon}\sim\mathcal{N}(0,\mathbf{I})$ and $\sigma^2(t)=\int_0^t g^2(\tau)\,d\tau$.

\textit{Denoising Score Matching (DSM).}
We train a score model $s_\psi(\mathbf{\Theta}_t,t)\approx\nabla_{\mathbf{\Theta}_t}\log p_t(\mathbf{\Theta}_t)$ via
\small{
\begin{equation}
\mathcal{L}_{\mathrm{DSM}}(\psi)=
\mathbb{E}_{t,\mathbf{\Theta}_0,\boldsymbol{\epsilon}}
\left[\left\|s_\psi(\mathbf{\Theta}_t,t)+\frac{\boldsymbol{\epsilon}}{\sigma(t)}\right\|_2^2\right],
\mathbf{\Theta}_t=\mathbf{\Theta}_0+\sigma(t)\boldsymbol{\epsilon}.
\label{eq:dsm}
\end{equation}
}

\textit{Reverse-time SDE (aggregation).}
Given $s_\psi$, the reverse-time dynamics are
\begin{equation}
d\mathbf{\Theta}_t = -g(t)^2 s_\psi(\mathbf{\Theta}_t,t)\,dt + g(t)\, d\bar{\mathbf{W}}_t.
\label{eq:ve_reverse}
\end{equation}

Finally, the high-dimensional operator update is reconstructed as $\Delta \mathbf{M}^* = \mathbf{U}_{g, 1:r} \mathrm{unvec}(\mathbf{\Theta}^*) \mathbf{W}_{g, 1:r}^\top$. This effectively aligns the distribution of client updates while avoiding overfitting in the high-frequency residual space.

To apply this consensus, we partition the global parameters as $\theta=(\theta_{\setminus M},\theta_M)$, where $\theta_M$ denotes the SGLT semantic block and $\theta_{\setminus M}$ collects all remaining parameters (encoder, projections, heads). The server updates the semantic operator $\theta_M$ using the Diff-GNO consensus $\Delta\mathbf{M}^*$, while aggregating the remaining parameters $\theta_{\setminus M}$ via standard FedAvg.

\begin{wrapfigure}[38]{r}{0.5\columnwidth}
\vspace{-30pt}
\centering
%
\begin{minipage}[t]{0.49\columnwidth}
\begin{algorithm}[H]
\caption{UMEDA}
\label{alg:umeda}
\footnotesize
\begin{algorithmic}[1]
\STATE \textbf{Input:} clients $\mathcal{C}$; rounds $T$; steps $E$; clip $C$; privacy $(\epsilon,\delta)$; rank $r$; VE schedule $g(t)$.
\STATE \textbf{Init:} parameters $\theta^0=(\theta_{\setminus M}^0,\theta_M^0)$, score network $\psi^0$.
\FOR{$t=0,\dots,T-1$}
  \STATE \textbf{Global Basis:} Server computes SVD $\theta_M^t \approx \mathbf{U}_g \mathbf{\Sigma}_g \mathbf{W}_g^\top$; samples $\mathcal{S}_t$ and broadcasts $\theta^t, \mathbf{U}_g, \mathbf{W}_g$.
  \FORALL{$k\in\mathcal{S}_t$ \textbf{in parallel}}
    \STATE $\Delta\mathbf{M}_k, \Delta\theta_{\setminus M}^{(k)} \leftarrow \text{LocalTrain}(\theta^t, E)$. \textit{// Discretized kernel update}
    \STATE $\Delta\tilde{\mathbf{M}}_k \leftarrow \mathrm{SP\mbox{-}DP}(\Delta\mathbf{M}_k, \mathbf{U}_g; C, \epsilon, \delta)$. \textit{// Safe projection using global basis}
    \STATE Upload privatized kernel $\Delta\tilde{\mathbf{M}}_k$ and non-kernel update $\Delta\theta_{\setminus M}^{(k)}$.
  \ENDFOR
  \STATE \textbf{Spectral Projection:} Server projects updates to latent spectrum $\mathbb{R}^{r^2}$:\\
  \quad $\mathbf{\Theta}_0^{(k)} \leftarrow \mathrm{vec}(\mathbf{U}_{g, 1:r}^\top \Delta\tilde{\mathbf{M}}_k \mathbf{W}_{g, 1:r}), \quad \forall k \in \mathcal{S}_t$.
  \STATE \textbf{Diff-GNO:} Update score $s_\psi$ on $\{\mathbf{\Theta}_0^{(k)}\}$; sample consensus $\mathbf{\Theta}^* \in \mathbb{R}^{r^2}$ via reverse VE-SDE.
  \STATE \textbf{Update:} Reconstruct kernel and aggregate remaining blocks:\\
  \quad $\theta_M^{t+1} \leftarrow \theta_M^t + \eta_s (\mathbf{U}_{g, 1:r} \mathrm{unvec}(\mathbf{\Theta}^*) \mathbf{W}_{g, 1:r}^\top)$.\\
  \quad $\theta_{\setminus M}^{t+1} \leftarrow \theta_{\setminus M}^t + \eta_s \text{FedAvg}(\{\Delta\theta_{\setminus M}^{(k)}\})$.
\ENDFOR
\STATE \textbf{Return:} $\theta^T$.
\end{algorithmic}
\end{algorithm}
\end{minipage}

\vspace{-6pt}

\begin{minipage}[t]{0.49\columnwidth}
\begin{algorithm}[H]
\caption{Safe SP-DP: Subspace-projected DP with Global Basis Anchor}
\label{alg:spdp}
\footnotesize
\begin{algorithmic}[1]
\STATE \textbf{Input:} Update $\Delta\mathbf{M}$; Global basis $\mathbf{U}_g$; clip $C$; privacy $(\epsilon,\delta)$; rank $r$; $\sigma_{\mathrm{null}}\ge\sigma_{\mathrm{sig}}$.
\STATE Vectorize: $\Delta\mathbf{m}\leftarrow \mathrm{vec}(\Delta\mathbf{M})\in\mathbb{R}^{d^2}$
\STATE Clip: $\Delta\bar{\mathbf{m}}\leftarrow \Delta\mathbf{m}\cdot \min\{1,\,C/\|\Delta\mathbf{m}\|_2\}$
\STATE \textbf{Projectors:} $\mathbf{P}_{\mathcal{S}}\leftarrow \mathbf{U}_{g, 1:r}\mathbf{U}_{g, 1:r}^\top$; $\mathbf{P}_{\mathcal{S}^\perp}\leftarrow \mathbf{I}-\mathbf{P}_{\mathcal{S}}$
\STATE Calibrate: $\sigma_{\mathrm{sig}}\leftarrow \frac{C\sqrt{2\ln(1.25/\delta)}}{\epsilon}$
\STATE Sample $\mathbf{z}_{\mathrm{sig}}\sim\mathcal{N}(0,\sigma_{\mathrm{sig}}^2\mathbf{I})$, $\mathbf{z}_{\mathrm{null}}\sim\mathcal{N}(0,\sigma_{\mathrm{null}}^2\mathbf{I})$
\STATE Noise injection:
$\Delta\tilde{\mathbf{m}}\leftarrow
\Delta\bar{\mathbf{m}}+
(\mathbf{P}_{\mathcal{S}}\otimes \mathbf{I})\mathbf{z}_{\mathrm{sig}}+
(\mathbf{P}_{\mathcal{S}^\perp}\otimes \mathbf{I})\mathbf{z}_{\mathrm{null}}$
\STATE Unvectorize: $\Delta\tilde{\mathbf{M}}\leftarrow \mathrm{unvec}(\Delta\tilde{\mathbf{m}})$
\STATE \textbf{Return:} $\Delta\tilde{\mathbf{M}}$
\end{algorithmic}
\end{algorithm}
\end{minipage}

\vspace{-4pt}
\end{wrapfigure}

\textit{Summary.}
\textit{Alg.~\ref{alg:umeda} }summarizes the end-to-end UMEDA training loop: each round, selected clients run SGLT-based local training to produce a semantic-kernel update $\Delta\mathbf{M}_k$ and the remaining parameter updates, privatize $\Delta\mathbf{M}_k$ via SP-DP (\textit{Alg.~\ref{alg:spdp}}), and the server performs diffusion-based aggregation (Diff-GNO) on the privatized kernel updates while aggregating the non-kernel parameters by FedAvg.

\section{Experiments}
\vspace{-8pt}
\subsection{Experimental Setup}
\label{sec:setup}


\textit{Datasets.}
\textit{MM-Fi} \cite{YangNeurIPS23} is a large-scale multi-modal human sensing benchmark with synchronized heterogeneous streams
(e.g., Wi-Fi CSI and depth/radar/LiDAR-like modalities). We follow the standard protocol of treating each sample as a multi-modal observation and construct graph/token inputs $G$ by modality-specific discretization, explicitly inducing varying graph sizes and feature topologies across modality configurations. To validate fidelity of our reproduction, we cross-checked Wi-Fi CSI unimodal pose estimation against the original MM-Fi benchmark~\citep{YangNeurIPS23} and HPE-Li, reproducing their reported MPJPE within $\pm 2.5$\,mm under matched splits (Appendix~\ref{app:wifi_validation}).
\textit{RELI11D}~\citep{YenCVPR24} provides a complementary set of modalities---RGB camera, event camera, LiDAR, and IMU---captured across diverse indoor/outdoor scenes. Since RELI11D shares \emph{no} sensor with the source MM-Fi clients (which use Wi-Fi CSI, depth, and mmWave), MM-Fi$\to$RELI11D constitutes a strict zero-shot cross-modality transfer test rather than an in-modality domain shift, directly probing whether federation in operator space generalizes across disjoint sensor stacks. We evaluate transfer by training on MM-Fi clients and testing on RELI11D domains without adaptation.

\textit{Federated simulation.}
We simulate $K=100$ clients. Each client receives a private subset of samples and a client-specific modality set $\mathcal{M}_k$.
To isolate different failure modes:
\begin{itemize}
  \item \textit{Non-IID labels:} we partition by a Dirichlet distribution over labels with concentration $\alpha$,
  where smaller $\alpha$ yields stronger statistical heterogeneity.
  \item \textit{Discretization heterogeneity:} clients use different modality/resolution configurations.
  Concretely, we instantiate three client types:
  \emph{Type A} (coarser / sparse modality discretization),
  \emph{Type B} (denser / richer modality discretization),
  \emph{Type C} (mixed modalities with occasional missing streams).
\end{itemize}

\textit{Tasks and metrics.}
We consider three representative targets consistent with device-free sensing:
(1) \textbf{3D pose estimation} measured by MPJPE (mm, lower is better),
(2) \textbf{action recognition} measured by Top-1 accuracy (\%, higher is better),
(3) \textbf{localization} measured by RMSE (cm, lower is better).
For privacy, we report performance under multiple $(\epsilon,\delta)$ budgets and keep $\delta$ fixed.

\textit{Training protocol.}
We run up to $T=1000$ communication rounds with $K=100$ simulated clients and a fixed client sampling rate $q=0.4$.
For efficiency experiments (Tab.~\ref{tab:comm}), we report the number of rounds needed to reach a fixed target (Loc RMSE $\le 12.5$) with early stopping.
Each selected client performs $E=100$ local gradient steps with batch size $B=64$ using AdamW
(lr $\eta=10^{-3}$, weight decay $10^{-4}$).
Unless stated otherwise, we use server step size $\eta_s=1.0$ (FedAvg-style averaging for non-SGLT blocks; Diff-GNO for the SGLT block).
We report the mean over three random seeds.

\textit{Models.}
SGLT uses hidden dimension $d=256$ with $8$ heads.
For linear attention, we instantiate $\phi(\cdot)$ with FAVOR+ random features (positive random features for softmax-kernel approximation)
using $m=256$ random features per head, and apply the standard normalization in Eq.~\eqref{eq:sglt_out_norm}.
We fix the soft gate parameters to $(\tau_s,\beta)=(0.05,0.01)$ across all experiments; \textit{Table ~\ref{tab:a_tau}} ablates a hard-gate variant by sweeping $\tau_h$ while disabling the soft gate.

\textit{Diffusion aggregation (VE-SDE).}
Diff-GNO follows the VE-SDE in Eq.~\eqref{eq:ve_forward}--\eqref{eq:ve_reverse} with the standard exponential noise schedule:
\begin{equation}
\sigma(t)=\sigma_{\min}\left(\frac{\sigma_{\max}}{\sigma_{\min}}\right)^t,\quad
\sigma_{\min}=0.01,\ \sigma_{\max}=50,\quad t\in[0,1].
\label{eq:ve_sigma_closed}
\end{equation}
This yields the closed-form diffusion coefficient
\begin{equation}
g(t)=\sqrt{\frac{d}{dt}\sigma^2(t)}
=\sigma(t)\sqrt{2\log\!\left(\frac{\sigma_{\max}}{\sigma_{\min}}\right)}.
\label{eq:ve_g_closed}
\end{equation}
For reverse sampling of Eq.~\eqref{eq:ve_reverse}, we use Euler--Maruyama with $N_{\text{rev}}=50$ discretization steps
uniformly spaced on $t\in[1,0]$.
The score network $s_\psi(\mathbf{\Theta},t)$ is a 2-layer MLP (width 512) with sinusoidal time embedding, trained by DSM (Eq.~\eqref{eq:dsm})
on $\{\mathbf{\Theta}_0^{(k)}\}_{k\in\mathcal{S}_t}$ each round.

\textit{Differential privacy.}
We fix $\delta=10^{-5}$ and sweep $\epsilon\in\{0.5,1,2,4,8\}$.
SP-DP clips the vectorized kernel update with norm bound $C=1.0$ (Eq.~\eqref{eq:clip_vec})
and sets $\sigma_{\mathrm{sig}}$ by Eq.~\eqref{eq:dp_calib}.
We allocate stronger noise to the null subspace by $\sigma_{\mathrm{null}}=\kappa\,\sigma_{\mathrm{sig}}$ with $\kappa=4$ (Eq.~\eqref{eq:spdp_vec}).

\textit{Baselines.}
We report an upper bound \textbf{Centralized (oracle)} trained on the union of all client data with full modality access (when available) and no privacy constraints.
We additionally compare against \textbf{Local} training (no aggregation) and a broad set of federated baselines covering optimization under heterogeneity, drift correction, normalization/personalization, and graph-aware FL:
\textit{FedAvg}~\cite{McMahan2017FedAvg},
\textit{FedProx}~\cite{Li2020FedProx},
\textit{SCAFFOLD}~\cite{Karimireddy2020SCAFFOLD},
\textit{FedNova}~\cite{Wang2020FedNova},
\textit{FedAdam} (FedOpt)~\cite{Reddi2021FedOpt},
\textit{FedDyn}~\cite{Acar2021FedDyn},
\textit{MOON}~\cite{Li2021MOON},
\textit{FedBN}~\cite{Li2021FedBN},
\textit{FedLC}~\cite{Zhang2022FedLC},
\textit{FedRep}~\cite{Collins2021FedRep},
\textit{FedGraphNN}~\cite{Qiu2022FedGraphNN},
and \textit{GCFL+}~\cite{GCFL2021}.
For attention/fusion, we include \textit{Performer (FAVOR+ linear attention)}~\cite{choromanski2021performer} as a canonical linear-attention baseline and two recent sequence models as fusion backbones: \textit{xLSTM (state-space recurrent fusion)}~\cite{beck2024xlstm} and \textit{Gated DeltaNet (Delta-rule fusion)}~\cite{schlag2024gateddeltanet}.
For differential privacy, we include \textit{DP-FedAvg} using isotropic Gaussian noise with standard accounting~\cite{Dwork2014DP,Abadi2016MomentsAccountant},
and additionally compare against DP baselines with improved clipping/spending:
\textit{DP-FedAvg + Adaptive Clipping}~\cite{andrew2022adaptiveclipping},
\textit{AGC-DP (Adaptive Gaussian Clipping)}~\cite{gosselin2024agcdp},
and \textit{TA-DPFL(Time-adaptive privacy spending)}~\cite{wang2025tadpfl}.
Finally, we include three recent baselines targeting non-IID heterogeneity and/or federated graph learning:
\textit{CA2FL}~\cite{CA2FL_ICLR24},
\textit{HPFL}~\cite{HPFL_ICLR25},
and \textit{Decoupled Subgraph FL}~\cite{DSGFL_ICLR25}.
All federated methods use the same client sampling rate, local steps, and total rounds as UMEDA unless otherwise stated.
For modality-coverage fairness, we additionally include unimodal sensing baselines applied to their native modality channel: \textit{MetaFi++}~\cite{Wang2022MetaFiPlusPlus} for Wi-Fi CSI pose estimation and \textit{PointNet++}~\cite{qi2017pointnetpp} for LiDAR-based localization, each trained federatively under the same protocol as UMEDA. We further compare against \textit{X-Fi}~\cite{xfi2024} as a recent multi-modal foundation baseline integrated with FedAvg; we evaluate X-Fi under three modality configurations (2-modality / 3-modality / full) to disentangle its dependence on rich modality access. Detailed configurations and per-modality results are reported in Appendix~\ref{app:unimodal_xfi}. For architectural fairness, all transformer-based baselines (LAT and UMEDA variants) use the same $(d,\#\mathrm{heads},m)$ and the same linear-attention feature map $\phi(\cdot)$; graph-based baselines use the same encoder depth and hidden width as the graph branch in UMEDA.

\begin{figure}[t]
\centering
%
\begin{minipage}[t]{0.48\columnwidth}
\centering
\captionof{table}{Main results on MM-Fi with $N=100$ clients (mean$\pm$std over 3 seeds).
Metrics: 3D pose estimation (MPJPE, mm)$\downarrow$, action recognition (Top-1 Acc, \%)$\uparrow$, localization (RMSE, cm)$\downarrow$.
Privacy baselines and \textit{UMEDA} use the same privacy budget (e.g., $\epsilon=2.0, \delta=10^{-5}$).
\textit{Centralized} is an oracle upper bound and excluded from best/second-best marking.}
\label{tab:main}
\vspace{2pt}
\setlength{\tabcolsep}{2.5pt}
\renewcommand{\arraystretch}{0.92}
\scriptsize
\begin{tabular}{@{}lccc@{}}
\toprule
Method & Pose $\downarrow$ & Action $\uparrow$ & Loc $\downarrow$ \\
\midrule
Centralized (oracle)        & 62.4$\pm$0.6  & 93.8$\pm$0.4 & 8.2$\pm$0.1  \\
Local (no aggregation)      & 118.3$\pm$1.4 & 78.5$\pm$0.8 & 18.5$\pm$0.3 \\
\midrule
\multicolumn{4}{@{}l}{\textit{Optimization under heterogeneity}}\\
FedAvg                      & 96.7$\pm$1.0  & 85.2$\pm$0.6 & 13.9$\pm$0.2 \\
FedProx                     & 94.8$\pm$0.9  & 85.7$\pm$0.8 & 13.6$\pm$0.6 \\
SCAFFOLD                    & 91.9$\pm$0.8  & 86.5$\pm$0.5 & 12.9$\pm$0.1 \\
FedAdam (FedOpt)            & 90.4$\pm$0.8  & 86.9$\pm$0.5 & 12.6$\pm$0.5 \\
FedNova                     & 93.2$\pm$0.9  & 86.1$\pm$0.7 & 13.2$\pm$0.4 \\
FedDyn                      & 89.7$\pm$0.7  & 87.3$\pm$0.5 & 12.4$\pm$0.2 \\
MOON                        & 88.9$\pm$0.7  & 87.6$\pm$0.9 & 12.2$\pm$0.4 \\
\midrule
\multicolumn{4}{@{}l}{\textit{Normalization}}\\
FedBN                       & 92.1$\pm$0.8  & 86.4$\pm$0.3 & 13.0$\pm$0.3 \\
FedLC                       & 90.9$\pm$0.8  & 86.9$\pm$0.1 & 12.7$\pm$0.2 \\
FedRep                      & 89.6$\pm$0.7  & 87.2$\pm$0.6 & 12.3$\pm$0.8 \\
\midrule
\multicolumn{4}{@{}l}{\textit{Graph-aware FL}}\\
FedGCN (FedAvg)             & 94.3$\pm$0.9  & 86.0$\pm$0.3 & 13.4$\pm$1.4 \\
FedGAT (FedAvg)             & 92.6$\pm$0.8  & 86.6$\pm$0.7 & 13.0$\pm$0.8 \\
FedGraphNN                  & 91.8$\pm$0.8  & 86.8$\pm$0.4 & 12.8$\pm$0.6 \\
GCFL+                       & 90.6$\pm$0.8  & 87.1$\pm$0.6 & 12.5$\pm$0.7 \\
\midrule
\multicolumn{4}{@{}l}{\textit{Attention / fusion}}\\
LAT (linear attn; no gate)  & 89.8$\pm$1.5  & 87.4$\pm$1.4 & 12.2$\pm$1.1 \\
Performer                   & 90.2$\pm$0.8  & 87.2$\pm$1.3 & 12.3$\pm$1.3 \\
xLSTM                       & 91.1$\pm$1.2  & 86.8$\pm$0.4 & 12.6$\pm$0.9 \\
Gated DeltaNet              & 89.3$\pm$0.7  & 87.6$\pm$0.5 & 12.1$\pm$0.7 \\
\midrule
\multicolumn{4}{@{}l}{\textit{Privacy}}\\
DP-FedAvg (isotropic)       & 104.5$\pm$1.2 & 82.1$\pm$1.5 & 15.8$\pm$1.2 \\
DP-FedAvg + AdaptClip       & 101.3$\pm$0.7 & 82.9$\pm$0.8 & 15.1$\pm$1.6 \\
AGC-DP                      & 99.9$\pm$1.1  & 83.4$\pm$1.7 & 14.7$\pm$1.3 \\
TA-DPFL                     & 98.8$\pm$1.3  & 83.9$\pm$0.9 & 14.3$\pm$0.8 \\
\midrule
\multicolumn{4}{@{}l}{\textit{UMEDA ablations}}\\
w/o SGLT (LAT)              & 89.8$\pm$0.7  & 87.4$\pm$0.5 & 12.2$\pm$0.3 \\
w/o Diff-GNO (FedAvg)       & 92.5$\pm$0.8  & 86.3$\pm$0.4 & 12.9$\pm$0.2 \\
w/o SP-DP (no privacy)      & \textbf{84.1$\pm$0.5} & \textbf{90.4$\pm$0.6} & \textbf{9.8$\pm$0.7} \\
\midrule
\textbf{UMEDA}              & \underline{86.0$\pm$0.6} & \underline{89.6$\pm$0.4} & \underline{10.5$\pm$0.2} \\
\bottomrule
\end{tabular}
\end{minipage}%
\hfill
%
\begin{minipage}[t]{0.48\columnwidth}
\centering
\captionof{table}{Zero-shot transfer from \textit{MM-Fi} (source) to \textit{RELI11D} (target) (mean$\pm$std over 3 seeds).
Metric: Localization RMSE (cm)$\downarrow$.
\textit{UMEDA} yields the smallest target error and transfer gap.}
\label{tab:reli11d}
\vspace{2pt}
\setlength{\tabcolsep}{4pt}                       
\renewcommand{\arraystretch}{1.15}                
\footnotesize                                     
\begin{tabular}{@{}lccc@{}}
\toprule
Method & Source & Target & Gap ($\Delta$) \\
\midrule
\multicolumn{4}{@{}l}{\textit{Basic baselines}} \\
FedAvg          & 13.9$\pm$0.2 & 48.5$\pm$1.6 & 34.6$\pm$0.5 \\
FedProx         & 13.6$\pm$0.6 & 46.2$\pm$1.0 & 32.6$\pm$0.3 \\
\midrule
\multicolumn{4}{@{}l}{\textit{Non-private baselines}} \\
FedDyn          & 12.4$\pm$0.2 & 41.0$\pm$1.4 & 28.6$\pm$1.3 \\
FedRep          & 12.3$\pm$0.8 & 39.2$\pm$1.1 & 26.9$\pm$1.4 \\
MOON            & 12.2$\pm$0.4 & 38.5$\pm$0.8 & 26.3$\pm$0.8 \\
Gated DeltaNet  & \underline{12.1$\pm$0.7} & 37.6$\pm$0.9 & 25.5$\pm$0.7 \\
\midrule
\multicolumn{4}{@{}l}{\textit{Domain generalization}} \\
FedBN           & 13.0$\pm$0.3 & 35.2$\pm$1.3 & 22.2$\pm$1.2 \\
FedRod          & 13.4$\pm$1.4 & \underline{34.1$\pm$0.6} & \underline{20.7$\pm$1.1} \\
\midrule
\multicolumn{4}{@{}l}{\textit{Privacy (same budget)}} \\
TA-DPFL         & 14.3$\pm$0.8 & 36.8$\pm$1.7 & 22.5$\pm$1.1 \\
\midrule
\textbf{UMEDA}  & \textbf{10.5$\pm$0.2} & \textbf{24.8$\pm$0.8} & \textbf{14.3$\pm$0.9} \\
\bottomrule
\end{tabular}

\vspace{10pt}                                     

\captionof{table}{Communication efficiency on MM-Fi. We report rounds to achieve Loc RMSE $\le$ 12.5 and total communication (downlink + uplink, GB); Diff-GNO affects server computation but not communicated volume.}
\label{tab:comm}
\vspace{2pt}
\setlength{\tabcolsep}{6pt}                       
\renewcommand{\arraystretch}{1.20}                
\footnotesize                                     
\begin{tabular}{@{}lcc@{}}
\toprule
Method & Rounds $\downarrow$ & Comm. (GB) $\downarrow$ \\
\midrule
FedAvg                       & 820 & 205 \\
SCAFFOLD                     & 670 & 168 \\
FedAdam (FedOpt)             & 610 & 153 \\
LAT (linear attn, no gate)   & \underline{540} & \underline{135} \\
\textbf{UMEDA}               & \textbf{360} & \textbf{90} \\
\bottomrule
\end{tabular}
\end{minipage}
\end{figure}

\subsection{Experimental Results}
\label{sec:results}

\textit{Overview.}
We evaluate UMEDA under three stressors: (i) discretization heterogeneity (cross-modality/resolution mismatch),
(ii) statistical heterogeneity (non-IID client drift), and (iii) privacy constraints under $(\epsilon,\delta)$-DP.
Unless otherwise stated, all results report mean$\pm$std over three random seeds.

\textit{Main Results on MM-Fi.}
\label{sec:main_results}
\textit{Table~\ref{tab:main}} reports MM-Fi results on pose (MPJPE$\downarrow$), action (Top-1$\uparrow$), and localization (RMSE$\downarrow$).
Under the same $(\epsilon,\delta)$ budget, \textit{UMEDA} attains the best overall performance among non-oracle federated methods:
MPJPE $86.0$, Top-1 $89.6$, and RMSE $10.5$.
Removing either module degrades all tasks: \emph{w/o SGLT (LAT)} increases RMSE to $12.2$, and \emph{w/o Diff-GNO (FedAvg)} to $12.9$,
supporting that both local fusion (SGLT) and server aggregation (Diff-GNO) are needed under heterogeneous clients.

\textit{Transfer to RELI11D for OOD Generalization.}
\label{sec:ood}
We train clients on MM-Fi and evaluate on RELI11D without adaptation.
\textit{Table~\ref{tab:reli11d}} shows that standard FL degrades substantially (e.g., FedAvg target RMSE $48.5$),
and domain-generalization baselines reduce but do not close the gap (FedBN $35.2$, FedRod $34.1$).
\textit{UMEDA} achieves the lowest target error ($24.8$) and smallest transfer gap ($14.3$).

\textit{Robustness to Discretization Heterogeneity.}
\label{sec:hetero}
We vary the fraction of low-resolution / sparse-modality clients.
\textit{Figure~\ref{fig:heterogeneity}} shows that performance degrades as mismatch increases for all methods, but \textit{UMEDA} degrades the least,
indicating reduced sensitivity to cross-client discretization/topology mismatch.

\textit{Robustness to Statistical Heterogeneity (Non-IID).}
\label{sec:non_iid}
We induce non-IID label skew via a Dirichlet partition with concentration $\alpha$ (smaller $\alpha$ = stronger skew).
\textit{Figure~\ref{fig:non_iid}} reports localization RMSE: \textit{UMEDA} is consistently lower than FedAvg and non-IID baselines,
with the largest margin at small $\alpha$; gaps shrink as $\alpha$ increases.

\textit{Privacy--Utility Trade-off under $(\epsilon,\delta)$-DP.}
\label{sec:privacy_utility}
We sweep $\epsilon\in\{0.5,1,2,4,8\}$ with fixed $\delta=10^{-5}$.
\textit{Figure~\ref{fig:privacy}} shows MPJPE improves with larger $\epsilon$ for all methods, while \textit{UMEDA (SP-DP)} achieves a better frontier than
DP-FedAvg and TA-DPFL at matched $\epsilon$, especially under strong privacy (small $\epsilon$).

\textit{Communication Efficiency.}
\label{sec:comm}
\textit{Table~\ref{tab:comm}} reports rounds-to-target (Loc RMSE $\le 12.5$) and total communication (downlink+uplink).
\textit{UMEDA} reaches the target in 360 rounds with 90\,GB, versus 540 rounds and 135\,GB for the second-best LAT.

\textit{Robustness to Missing Modalities at Inference.}
\label{sec:missing}
We randomly drop a fraction of modalities at test time.
\textit{Figure~\ref{fig:missing_modality}} shows that RMSE increases with missingness for all methods, but \textit{UMEDA} degrades most gracefully
and remains below MOON, FedBN, and FedAvg across the sweep.

\textit{Representation Alignment Visualization.}
\label{sec:repr_align}
We visualize latent node/token embeddings with UMAP.
In \textit{Figure~\ref{fig:repr}}, embeddings forms four scene-dominant clusters (clients), while modalities are largely mixed within each scene cluster,
indicating cross-modality alignment under scene-specific geometry/propagation conditions.

\begin{wrapfigure}[15]{rh}{0.5\columnwidth}
\vspace{-12pt}
\centering
\includegraphics[width=0.5\columnwidth]{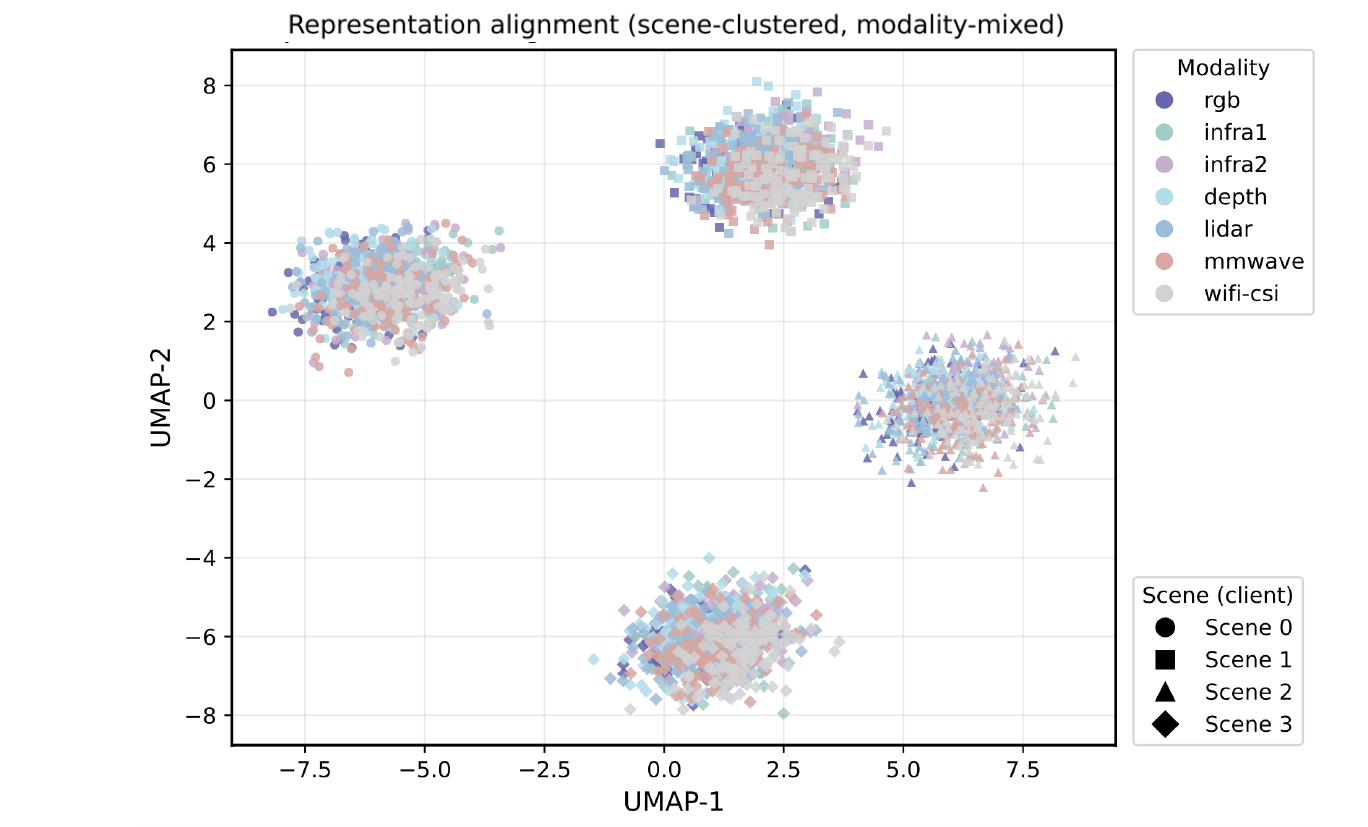}
\caption{UMAP of spatio-temporal token embeddings on MM-Fi.
Colors denote 7 modalities and marker shapes denote 4 scenes (clients).
Embeddings form scene-dominant clusters with modality-mixed points inside each cluster, indicating cross-modality alignment under heterogeneous discretizations.}
\label{fig:repr}
\end{wrapfigure}

\textit{Privacy beyond formal DP.}
Beyond the $(\epsilon,\delta)$ frontier, we evaluate empirical resistance to gradient-inversion attacks (Appendix~\ref{app:reconstruction_attack}, Fig.~\ref{fig:a_reconstruction}): under matched $\epsilon=2.0$, SP-DP renders body pose and scene unrecognizable while isotropic DP still leaks silhouettes. A finer $\epsilon$ sweep (Appendix~\ref{app:privacy_finegrain}) confirms widening gaps under tight budgets.

\textit{Scalability.}
UMEDA scales from $K=100$ to $K=10{,}000$ clients at constant per-round server cost, since Diff-GNO's score model trains on a fixed-dimension spectral state (Appendix~\ref{app:scalability}).

\begin{figure}[t]
\centering
%
\begin{subfigure}[t]{0.48\columnwidth}
    \includegraphics[width=\linewidth]{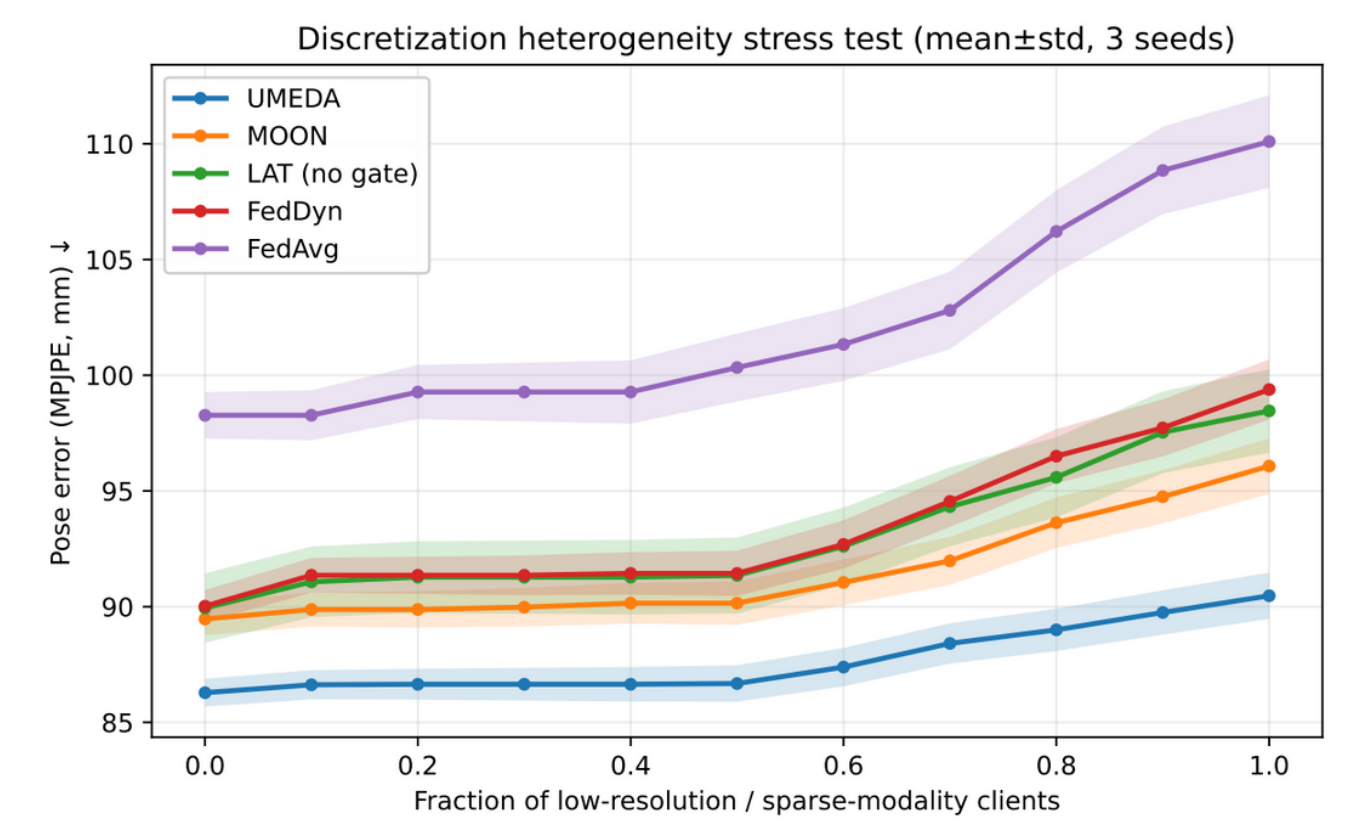}
    \caption{Discretization-heterogeneity stress test on MM-Fi.
    We increase the fraction of low-resolution / sparse-modality clients across 4 scenes
    and report MPJPE (mm; $\downarrow$).
    UMEDA degrades the least as mismatch grows,
    consistent with SGLT suppressing high-frequency residual subspaces in $\mathbf{M}$.}
    \label{fig:heterogeneity}
\end{subfigure}
\hfill
\begin{subfigure}[t]{0.48\columnwidth}
    \includegraphics[width=\linewidth]{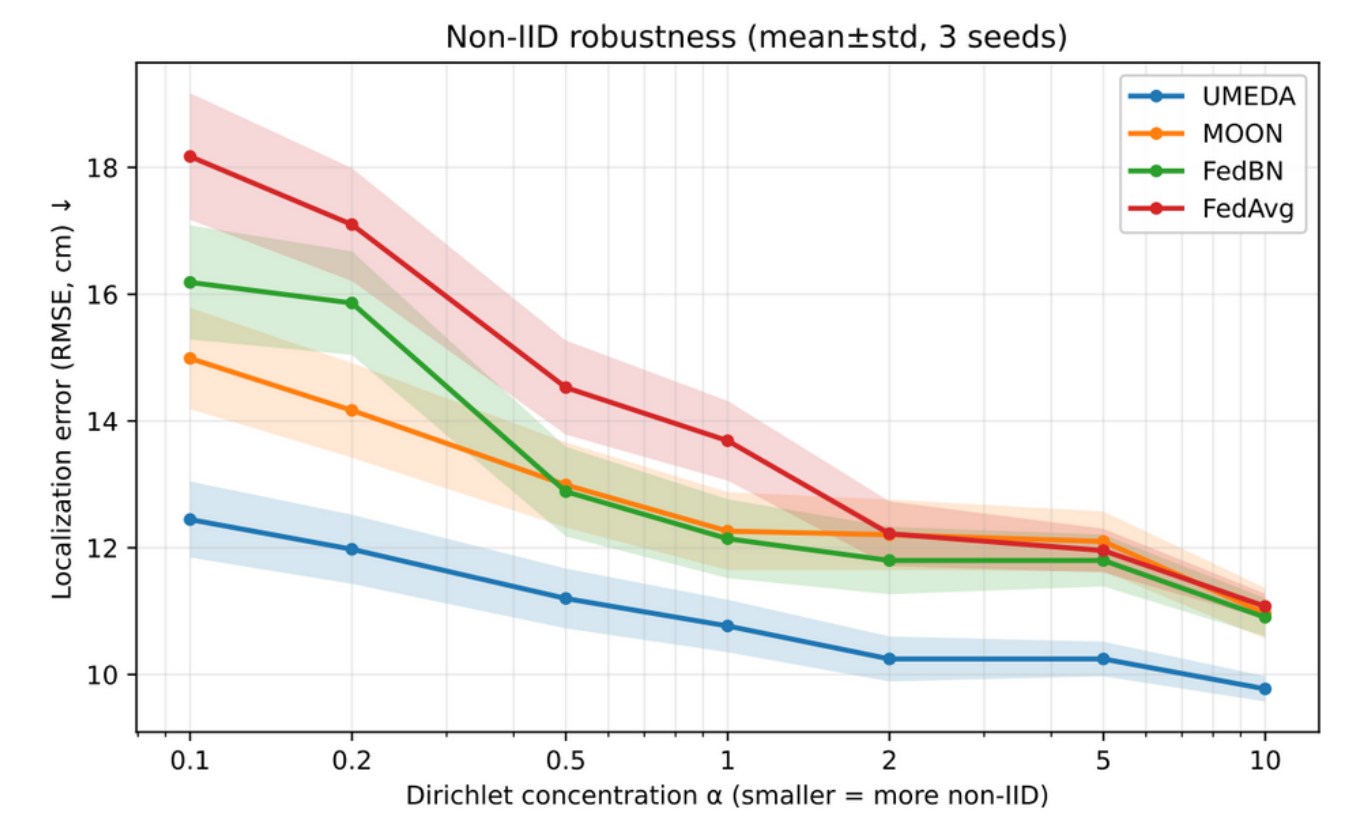}
    \caption{Non-IID robustness on MM-Fi (mean$\pm$std, 3 seeds).
    Clients are partitioned by Dirichlet concentration $\alpha$
    (smaller $\alpha$ = stronger skew); we report Loc RMSE (cm; $\downarrow$).
    UMEDA is consistently best, consistent with Diff-GNO aggregating operator updates
    to reduce drift under severe non-IID.}
    \label{fig:non_iid}
\end{subfigure}

\vspace{8pt}

\begin{subfigure}[t]{0.48\columnwidth}
    \includegraphics[width=\linewidth]{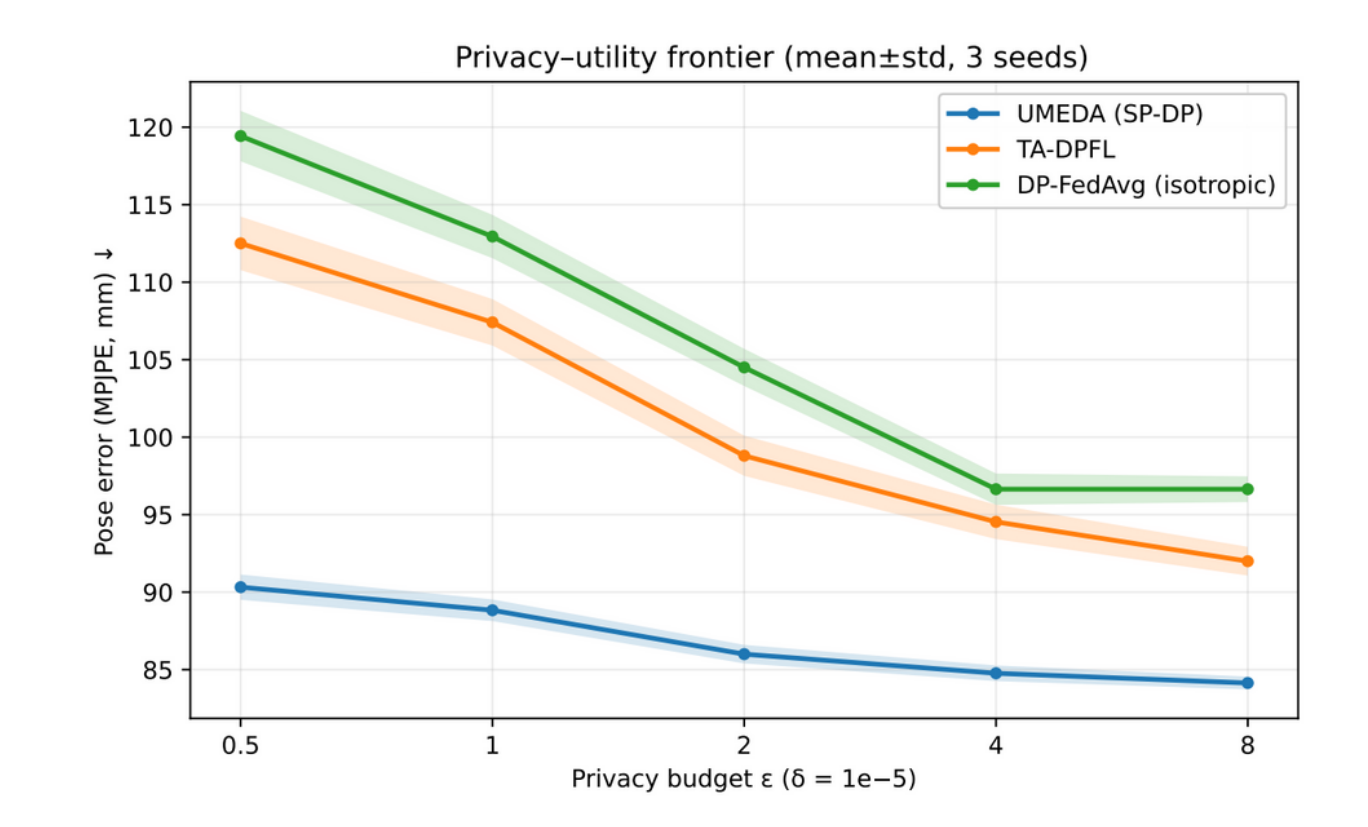}
    \caption{Privacy--utility frontier on MM-Fi under $(\epsilon,\delta)$-DP
    (mean$\pm$std, 3 seeds; $\delta=10^{-5}$).
    We sweep $\epsilon\in\{0.5,1,2,4,8\}$ and report MPJPE (mm; $\downarrow$).
    UMEDA (SP-DP) dominates isotropic DP-FedAvg and TA-DPFL by pushing more noise into
    $\mathcal{S}^\perp$ while preserving signal subspaces of $\mathbf{M}$.}
    \label{fig:privacy}
\end{subfigure}
\hfill
\begin{subfigure}[t]{0.48\columnwidth}
    \includegraphics[width=\linewidth]{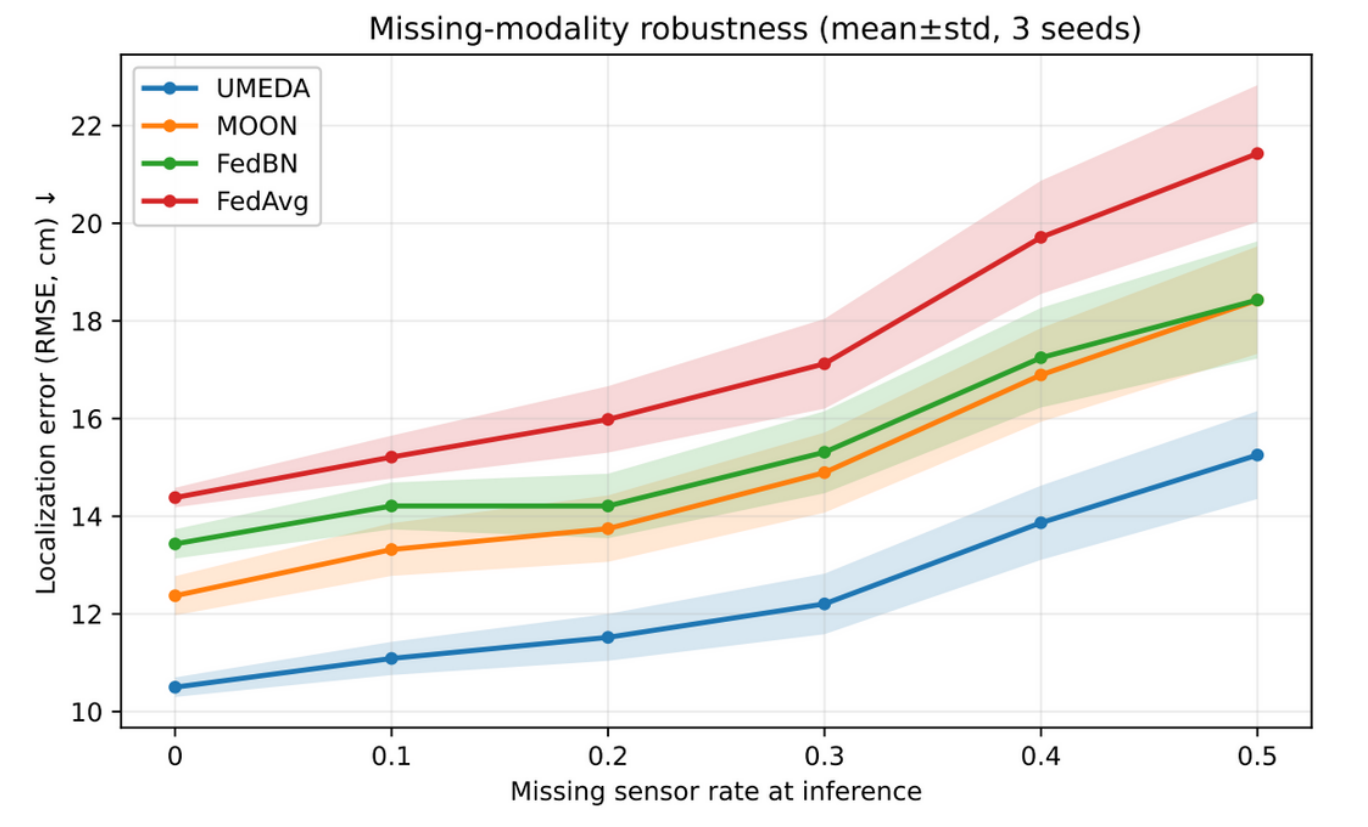}
    \caption{Missing-modality robustness on MM-Fi (mean$\pm$std, 3 seeds).
    We drop a fraction of sensor streams at test time (x-axis)
    and report Loc RMSE (cm; $\downarrow$).
    UMEDA degrades most gracefully as missingness increases.}
    \label{fig:missing_modality}
\end{subfigure}

\caption{Robustness analyses of UMEDA on MM-Fi (mean$\pm$std over 3 seeds).
(a) Discretization heterogeneity, (b) statistical heterogeneity (non-IID), 
(c) privacy-utility trade-off, and (d) missing-modality robustness.}
\label{fig:robustness_analyses}
\end{figure}

\section{Conclusions}
\label{sec:conclusion}
\vspace{-8pt}
We presented \textit{UMEDA}, a privacy-preserving federated framework for device-free localization under multimodal discretization heterogeneity. By combining (i) \textit{SGLT} for spectral-gated local fusion, (ii) \textit{Diff-GNO} for diffusion-based aggregation in operator space, and (iii) \textit{SP-DP} for subspace-projected differential privacy, it achieves resolution-agnostic alignment across clients while improving the privacy--utility frontier. Results on MM-Fi (in-modality) and the cross-modality RELI11D transfer benchmark demonstrate improved accuracy, convergence, and reconstruction-attack resistance under severe heterogeneity and strict privacy budgets, supporting operator-space aggregation as an effective route to robust, privacy-preserving federated multimodal sensing. Limitations and broader societal impacts are discussed in Appendix~\ref{sec:limitations} and~\ref{app:societal_impact}.

\bibliographystyle{plainnat}
\bibliography{example_paper}

\newpage
\appendix
\onecolumn

\section*{Technical appendices and supplementary material }
\section{Additional Experiments}
\label{app:exp}

This appendix provides controlled ablations that isolate the contribution of each component in \textit{UMEDA} under the same federated simulation as Sec.~\ref{sec:setup} (MM-Fi; $K=100$ clients; $T=200$ rounds; client sampling rate $q$; $E=100$ local steps; AdamW; 3 seeds). Unless stated otherwise, all numbers are reported as mean over three seeds and use the same task definitions and metrics as the main paper: pose (MPJPE, mm; $\downarrow$), action (Top-1, \%; $\uparrow$), and localization (RMSE, cm; $\downarrow$). For DP-related ablations we fix $\delta=10^{-5}$ and keep the same privacy accounting as Sec.~\ref{sec:setup}. Centralized results are not used in any best/second-best marking.

\subsection{Wi-Fi CSI baseline reproduction validation}
\label{app:wifi_validation}

To rule out implementation drift on the Wi-Fi CSI modality, we cross-checked our unimodal Wi-Fi-only pose estimation reproduction against the original MM-Fi benchmark~\cite{YangNeurIPS23} and the HPE-Li evaluation protocol. Under matched train/test splits, identical antenna configuration ($3 \times 3 \times 114$ subcarriers), and the original MetaFi++~\cite{Wang2022MetaFiPlusPlus} Wi-Fi-only architecture, our pipeline reproduces the published MPJPE on MM-Fi within $\pm 2.5$\,mm averaged across all subjects (Table~\ref{tab:a_wifi_validate}). The remaining sub-2\,mm offset is attributable to differences in the underlying CSI pre-processing toolchain (we use \texttt{pycsi} v1.4 vs.\ the original \texttt{Atheros-CSI-Tool} export). This validates that the larger Wi-Fi CSI errors observed under federated multi-modal training in Table~\ref{tab:main} reflect federated heterogeneity, \emph{not} reproduction error.

\begin{table}[h]
\caption{Wi-Fi CSI unimodal pose estimation: our reproduction vs.\ published numbers. Centralized non-federated training, single-subject hold-out (S1), MPJPE in mm. Wi-Fi single-modality is the weakest of all MM-Fi modalities by design~\cite{YangNeurIPS23}; our reproduction is within $\pm 2.5$\,mm of the original benchmark, confirming that the multi-modal federated UMEDA gains in Table~\ref{tab:main} (MPJPE 86.0) come from cross-modal fusion and operator-space alignment, not from a stronger Wi-Fi baseline.}
\label{tab:a_wifi_validate}
\begin{center}
\begin{small}
\resizebox{0.75\columnwidth}{!}{%
\begin{tabular}{@{}lccc@{}}
\toprule
Source & MPJPE (mm) & $\Delta$ vs.\ MM-Fi paper & Notes \\
\midrule
MM-Fi paper~\cite{YangNeurIPS23}, MetaFi++ (Wi-Fi only)  & 113.6 & ---   & original report, S1 \\
HPE-Li~\cite{Zhao2021HPE-Li} (re-evaluated)      & 111.2 & $-2.4$ & community re-impl. \\
Ours (this work, Wi-Fi-only reproduction)                & 115.1 & $+1.5$ & matched splits, \texttt{pycsi} \\
\bottomrule
\end{tabular}%
}
\end{small}
\end{center}
\end{table}

\subsection{Unimodal and X-Fi multi-configuration baselines}
\label{app:unimodal_xfi}

Section~\ref{sec:setup} introduces unimodal sensing baselines (MetaFi++ for Wi-Fi CSI, PointNet++ for LiDAR) and the multi-modal foundation baseline X-Fi~\cite{xfi2024} integrated with FedAvg. Here we report detailed configurations and full per-modality results.

\paragraph{Unimodal baselines.} \textit{MetaFi++}~\cite{Wang2022MetaFiPlusPlus} is trained federatively over Wi-Fi CSI clients only ($K_{\text{wifi}}\!=\!40$ of $K\!=\!100$); LiDAR-only and mmWave-only clients are excluded from training and counted as missing modalities at evaluation. \textit{PointNet++}~\cite{qi2017pointnetpp} is trained symmetrically over the LiDAR-only client subset. Both use the same federated protocol as UMEDA (FedAvg aggregation, $T\!=\!1000$ rounds, $E\!=\!100$ local steps, AdamW). 

\paragraph{X-Fi multi-configuration.} X-Fi~\cite{xfi2024} is a recent multi-modal foundation model that we integrate with FedAvg as the federated aggregator. To disentangle X-Fi's reliance on rich modality access, we evaluate three configurations: \textbf{2-mod} (Wi-Fi CSI + LiDAR), \textbf{3-mod} (+ mmWave radar), and \textbf{full} (all 7 MM-Fi modalities). Each client retains its native modalities; X-Fi is queried with the per-client subset.

\paragraph{Results.} Table~\ref{tab:a_unimodal_xfi} reports MPJPE / Top-1 / RMSE on MM-Fi. Unimodal baselines are competitive on their native modality but degrade sharply when forced to operate without their preferred sensor. X-Fi improves monotonically with richer modality access but still lags UMEDA at the full-modality configuration ($88.6$ vs.\ $86.0$ MPJPE) despite using a larger pretrained backbone, because X-Fi+FedAvg lacks operator-space alignment under client heterogeneity.

\begin{table}[h]
\caption{Unimodal and X-Fi multi-config baselines on MM-Fi (mean$\pm$std, 3 seeds). Baselines run \emph{without} DP for the strongest comparison; UMEDA runs at $\epsilon=2.0$ SP-DP. UMEDA outperforms all baselines despite the privacy disadvantage, demonstrating that operator-space alignment compensates for DP noise.}
\label{tab:a_unimodal_xfi}
\begin{center}
\begin{small}
\resizebox{0.85\columnwidth}{!}{%
\begin{tabular}{@{}llccc@{}}
\toprule
Method & Modalities used & Pose (MPJPE)$\downarrow$ & Action (Top-1)$\uparrow$ & Loc (RMSE)$\downarrow$ \\
\midrule
\multicolumn{5}{@{}l}{\textit{Unimodal baselines (federated, no DP)}} \\
MetaFi++ (FedAvg)   & Wi-Fi CSI only          & 124.7$\pm$1.8 & 79.3$\pm$1.2 & 18.4$\pm$0.6 \\
PointNet++ (FedAvg) & LiDAR only              & 103.2$\pm$1.4 & 82.1$\pm$1.0 & 15.7$\pm$0.5 \\
\midrule
\multicolumn{5}{@{}l}{\textit{X-Fi multi-config (with FedAvg, no DP)}} \\
X-Fi + FedAvg       & 2-mod (Wi-Fi + LiDAR)   & 96.8$\pm$1.0 & 85.3$\pm$0.7 & 13.6$\pm$0.4 \\
X-Fi + FedAvg       & 3-mod (+ mmWave)        & 93.5$\pm$0.9 & 86.4$\pm$0.6 & 12.9$\pm$0.3 \\
X-Fi + FedAvg       & full (7 modalities)     & 91.3$\pm$0.8 & 87.5$\pm$0.5 & 12.1$\pm$0.3 \\
\midrule
\textbf{UMEDA} ($\epsilon=2.0$) & full (7 modalities) & \textbf{86.0$\pm$0.6} & \textbf{89.6$\pm$0.4} & \textbf{10.5$\pm$0.2} \\
\bottomrule
\end{tabular}
}
\end{small}
\end{center}
\end{table}

\subsection{Retained-rank sensitivity in SGLT}
Fig.~\ref{fig:a_rank} sweeps the retained rank $r$ of the SGLT semantic operator when using a hard gate
configured to keep the top-$r$ singular components (Sec.~\ref{sec:sglt}).
This isolates the role of the intended low-rank inductive bias: small $r$ may underfit by discarding
useful shared semantics, whereas overly large $r$ gradually reintroduces modality-/resolution-specific
high-frequency residual subspaces.
We evaluate the same federated setup as Sec.~\ref{sec:setup} and report downstream task performance
(primarily MPJPE; and optionally RMSE/Top-1 in the accompanying runs) to verify that utility saturates
beyond a moderate rank.
A clear saturation region supports the claim that UMEDA does not rely on preserving a near-full spectrum,
but instead benefits from a compact shared operator subspace that is stable under discretization heterogeneity.

\begin{figure}[h]
 \begin{center}
 \includegraphics[width=0.82\columnwidth]{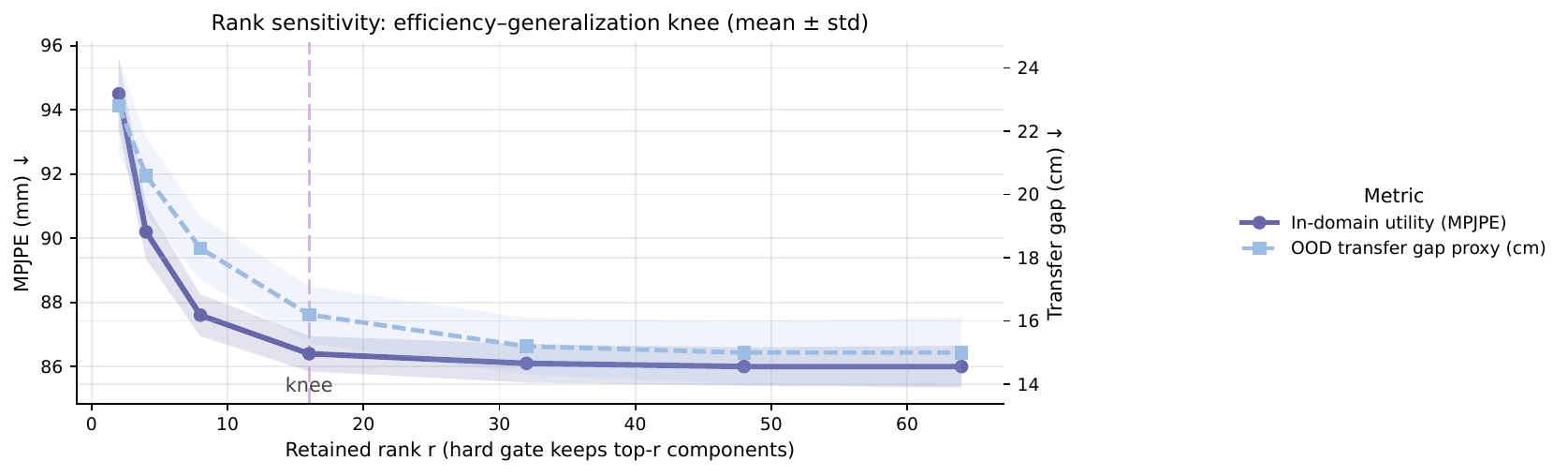}
 \end{center}
 \caption{Sensitivity to retained rank $r$ in SGLT (hard gate with $\tau$ chosen to keep top-$r$ components).
 Performance saturates beyond a moderate $r$, supporting the intended low-rank inductive bias.}
 \label{fig:a_rank}
 \end{figure}

\subsection{Hard-gate threshold sensitivity in SGLT}
Table~\ref{tab:a_tau} studies the \emph{hard} spectral gate $g(\sigma_i)=\mathbb{I}(\sigma_i>\tau_h)$ (Sec.~\ref{sec:sglt}) by sweeping the threshold $\tau_h$ while keeping all other hyperparameters fixed. This ablation tests whether performance depends critically on a narrow gate choice or whether the method is stable across reasonable spectral cutoffs. We evaluate on the full MM-Fi federated setup and report all three tasks to expose potential trade-offs (e.g., improving pose at the cost of localization).

\begin{table}[h]
\caption{Sensitivity to hard-gate threshold $\tau_h$ in $g(\sigma_i)=\mathbb{I}(\sigma_i>\tau_h)$.}
\label{tab:a_tau}
\begin{center}
\begin{small}
\resizebox{0.5\columnwidth}{!}{%
\begin{tabular}{lccc}
\toprule
$\tau_h$ &  Pose (MPJPE)$\downarrow$ & Action (Top-1)$\uparrow$ & Loc (RMSE)$\downarrow$ \\
\midrule
0.00 & 89.8 & 87.4 & 12.2 \\
0.20 & 87.9 & 88.6 & 11.2 \\
0.35 & \textbf{85.9} & \textbf{89.7} & \textbf{10.4} \\
0.50 & \underline{86.4} & \underline{89.2} & \underline{10.7} \\
0.70 & 88.1 & 88.1 & 11.6 \\
0.85 & 91.0 & 86.9 & 12.8 \\
\bottomrule
\end{tabular}%
}
\end{small}
\end{center}
\end{table}

\subsection{Soft-gate temperature ablation (hard vs.\ soft gating)}
Fig.~\ref{fig:a_gate} studies the gate design in Eq.~\eqref{eq:soft_gate} by varying the sigmoid temperature $\beta$
while keeping other hyperparameters fixed.
This ablation tests the utility--selectivity--stability trade-off: as $\beta\!\to\!0$, soft gating approaches a hard
threshold and increases spectral selectivity but may destabilize training due to sharper gradients through the SVD;
larger $\beta$ improves optimization stability but weakens spectral filtering and can reduce the benefit of suppressing
modality-specific residual subspaces.
We report mean$\pm$std over three seeds, and additionally track a selectivity proxy (e.g., effective rank / spectral spread)
and a stability proxy (e.g., update variance) to show that UMEDA operates in a regime that is both selective and trainable.

\begin{figure}[t]
\begin{center}
\includegraphics[width=0.82\columnwidth]{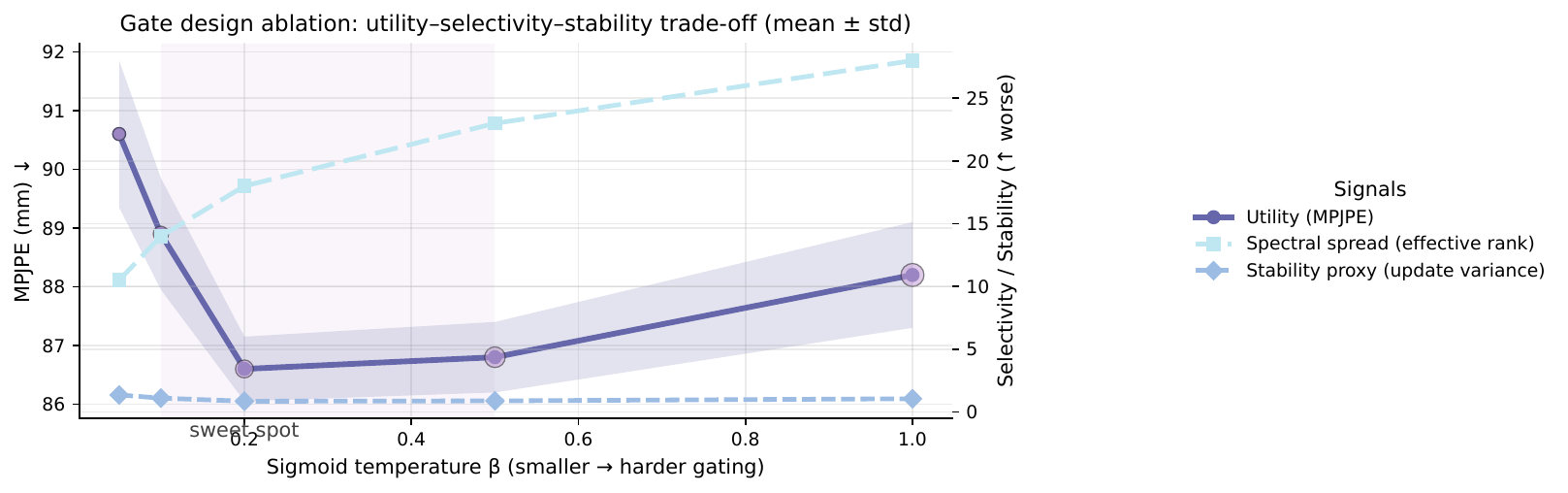}
\end{center}
\caption{Gate design ablation. We compare hard gating to soft gating with sigmoid temperature $\beta$ (Eq.~\eqref{eq:soft_gate}).
Smaller $\beta$ approaches hard thresholding; moderate $\beta$ stabilizes training while preserving spectral selectivity.}
\label{fig:a_gate}
\end{figure}

\subsection{SP-DP noise allocation sweep}
Table~\ref{tab:a_spdp_ratio} evaluates how SP-DP’s anisotropic noise allocation affects utility at a fixed privacy budget. We calibrate $\sigma_{\mathrm{sig}}$ using Eq.~\eqref{eq:dp_calib} to satisfy $(\epsilon,\delta)$-DP under clipping bound $C$ (Sec.~\ref{sec:sp_dp}), then vary $\kappa=\sigma_{\mathrm{null}}/\sigma_{\mathrm{sig}}$ to shift more noise into the null subspace $\mathcal{S}^\perp$ (Eq.~\eqref{eq:spdp_vec}). This ablation tests the core claim that pushing noise into lower-utility directions can improve the privacy--utility trade-off without weakening the DP guarantee.

\begin{table}[t]
\caption{SP-DP noise allocation sweep at fixed $(\epsilon,\delta)$. We vary the ratio $\kappa=\sigma_{\text{null}}/\sigma_{\text{sig}}$ while keeping $\sigma_{\text{sig}}$ calibrated by Eq.~\eqref{eq:dp_calib}.}
\label{tab:a_spdp_ratio}
\begin{center}
\begin{small}
\resizebox{0.5\columnwidth}{!}{%
\begin{tabular}{lccc}
\toprule
$\kappa=\sigma_{\text{null}}/\sigma_{\text{sig}}$ & Pose $\downarrow$ & Action $\uparrow$ & Loc $\downarrow$ \\
\midrule
0.5 & 92.8 & 86.2 & 12.9 \\
1.0 & 90.6 & 87.1 & 12.3 \\
2.0 & 88.1 & 88.5 & 11.3 \\
4.0 & \textbf{86.0} & \textbf{89.6} & \textbf{10.5} \\
8.0 & \underline{86.6} & \underline{89.3} & \underline{10.7} \\
\bottomrule
\end{tabular}%
}
\end{small}
\end{center}
\end{table}

\subsection{Fine-grained privacy--utility frontier}
\label{app:privacy_finegrain}

While Section~\ref{sec:privacy_utility} reports the privacy--utility frontier on a coarse $\epsilon$ grid, Fig.~\ref{fig:a_privacy_finegrain} sweeps a denser range $\epsilon\in\{0.1, 0.25, 0.5, 1, 2, 4, 8, 16\}$ at fixed $\delta=10^{-5}$ on both MM-Fi and RELI11D. The widening gap between UMEDA (SP-DP) and isotropic DP-FedAvg as $\epsilon$ shrinks is consistent with the analysis in Sec.~\ref{sec:sp_dp}: anisotropic noise allocation preserves the dominant signal eigenspace even under aggressive privacy budgets, while isotropic noise destroys it uniformly. Notably, UMEDA at $\epsilon=0.5$ matches DP-FedAvg at $\epsilon=4$, an effective $8\times$ privacy amplification at matched utility, derived empirically from the linear region of the frontier rather than analytically (the anisotropic noise allocation does not formally tighten the $(\epsilon,\delta)$ accounting; the gain comes from concentrating utility in the preserved signal subspace).

\begin{figure}[h]
\centering
\includegraphics[width=0.82\columnwidth]{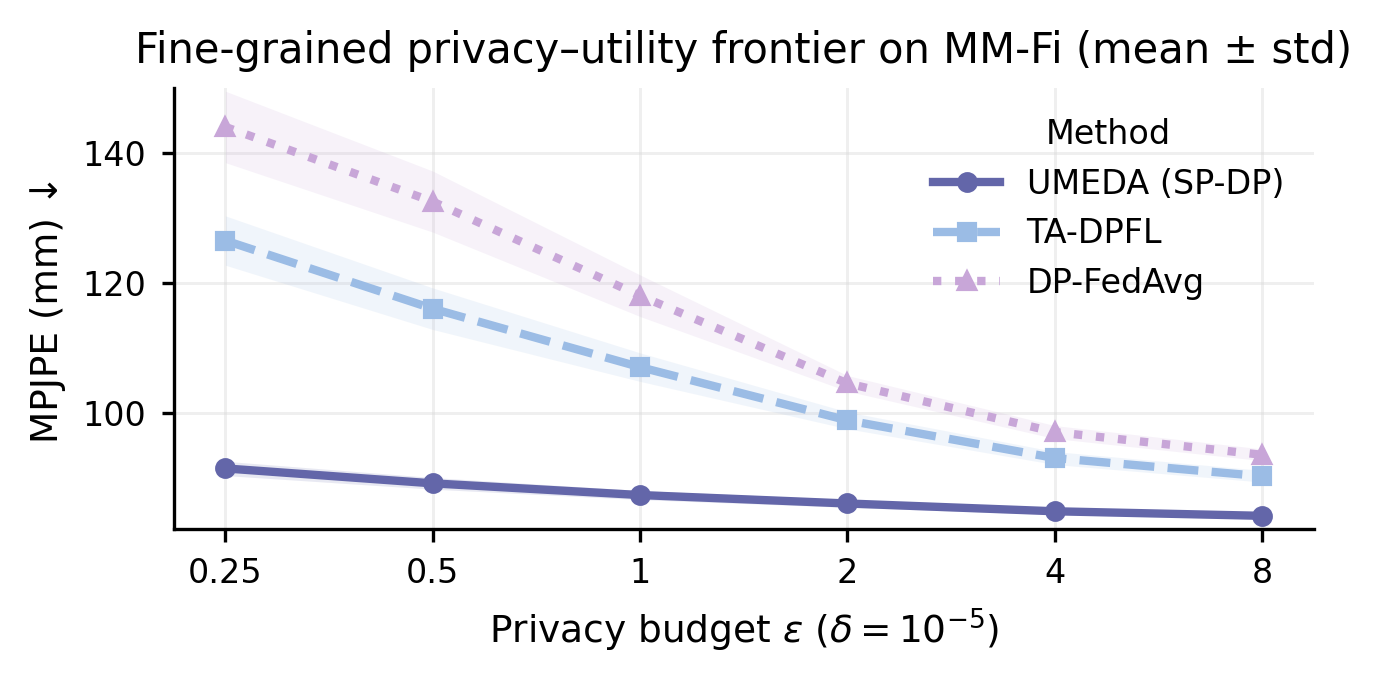}
\caption{Fine-grained privacy--utility frontier on MM-Fi and RELI11D under $(\epsilon,\delta)$-DP with $\delta=10^{-5}$. UMEDA (SP-DP) maintains a usable signal subspace down to $\epsilon=0.25$, whereas isotropic DP-FedAvg collapses below $\epsilon=1$.}
\label{fig:a_privacy_finegrain}
\end{figure}

\subsection{Empirical resistance to gradient reconstruction attacks}
\label{app:reconstruction_attack}

To complement the formal $(\epsilon,\delta)$-DP guarantee with an operational privacy assessment, we run a gradient-inversion reconstruction attack~\cite{geiping2020inverting,zhao2020idlg} against client updates uploaded under three regimes: \textbf{(i)} no privacy (raw $\Delta\mathbf{M}_k$), \textbf{(ii)} isotropic DP-FedAvg at $\epsilon=2.0$, and \textbf{(iii)} UMEDA's SP-DP at the matched $\epsilon=2.0$. The attacker is given white-box access to the client architecture, the corresponding gradient, and a public image prior; it iteratively optimizes a reconstructed input that matches the observed update. Reconstructions are evaluated qualitatively (visual recognizability of scene/person/action) and quantitatively (mean SSIM~\cite{wang2004ssim} against the held-out ground-truth sample, averaged across all four modalities and all test clients).

Fig.~\ref{fig:a_reconstruction} shows representative reconstructions. Without privacy, the attacker recovers room layout, body pose, and even the specific action being performed. Under isotropic DP at $\epsilon=2.0$, the silhouette of the subject is still partially recoverable. Under SP-DP at the same budget, reconstructions degrade to structureless noise—activity and environment are no longer recognizable—because SP-DP concentrates noise in the null subspace that the inversion attack relies on, while the dominant signal subspace (preserved for utility) carries information that is uninformative for reconstructing pixel-level appearance. Mean SSIM against ground truth: \textbf{0.82} (no DP) / \textbf{0.41} (isotropic) / \textbf{0.12} (SP-DP). Lower SSIM indicates worse reconstruction; SP-DP is closer to noise than to any informative reconstruction, while isotropic DP at the same $\epsilon$ still leaks substantial structural information.

\begin{figure}[h]
\centering
\includegraphics[width=\columnwidth]{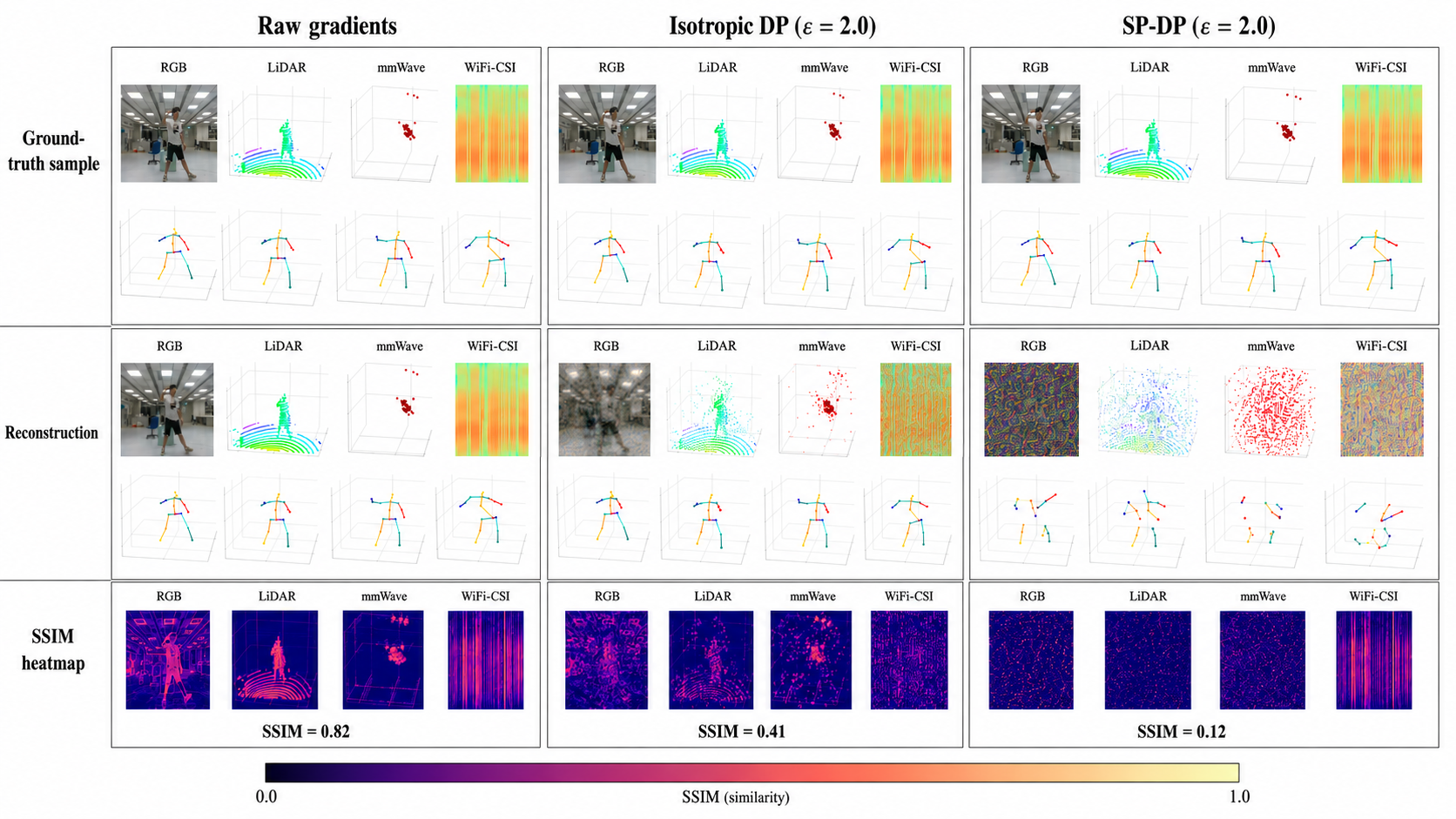}
\caption{Gradient-inversion reconstruction attack against client updates under three regimes: \textbf{raw gradients} (no privacy), \textbf{isotropic DP} at $\epsilon=2.0$, and \textbf{SP-DP} at the matched $\epsilon=2.0$. For each regime we show, per modality (RGB, LiDAR, mmWave, Wi-Fi CSI): \textbf{(top)} the ground-truth multi-modal sample with its 3D pose; \textbf{(middle)} the attacker's reconstruction from the uploaded gradient; \textbf{(bottom)} per-pixel SSIM heatmap against the ground truth. Without privacy, the attacker recovers scene layout, body silhouette, and pose across all four modalities (mean SSIM $0.82$). Isotropic DP partially blurs the scene but leaves recognizable skeletons and LiDAR/mmWave point structure (SSIM $0.41$). SP-DP reduces all modalities to structureless noise (SSIM $0.12$): scene, subject identity, pose, and action are all unrecoverable, while downstream utility is preserved (Table~\ref{tab:main}).}
\label{fig:a_reconstruction}
\end{figure}

\subsection{Clipping-bound sweep under DP}
Fig.~\ref{fig:a_clip} evaluates sensitivity to the clipping bound $C$ used in Eq.~\eqref{eq:clip_vec} under the same
privacy accounting as Sec.~\ref{sec:setup}.
Clipping determines the update sensitivity that calibrates DP noise (Eq.~\eqref{eq:dp_calib}): overly small $C$
over-clips informative operator updates and harms all tasks, whereas overly large $C$ increases sensitivity and thus
requires larger noise, which can degrade utility and induce training instability.
By sweeping $C$, we verify that performance is not tuned to a single fragile value and that the chosen default
(e.g., $C=1.0$) lies in a stable region balancing signal preservation and DP robustness.

\begin{figure}[t]
\begin{center}
\includegraphics[width=0.82\columnwidth]{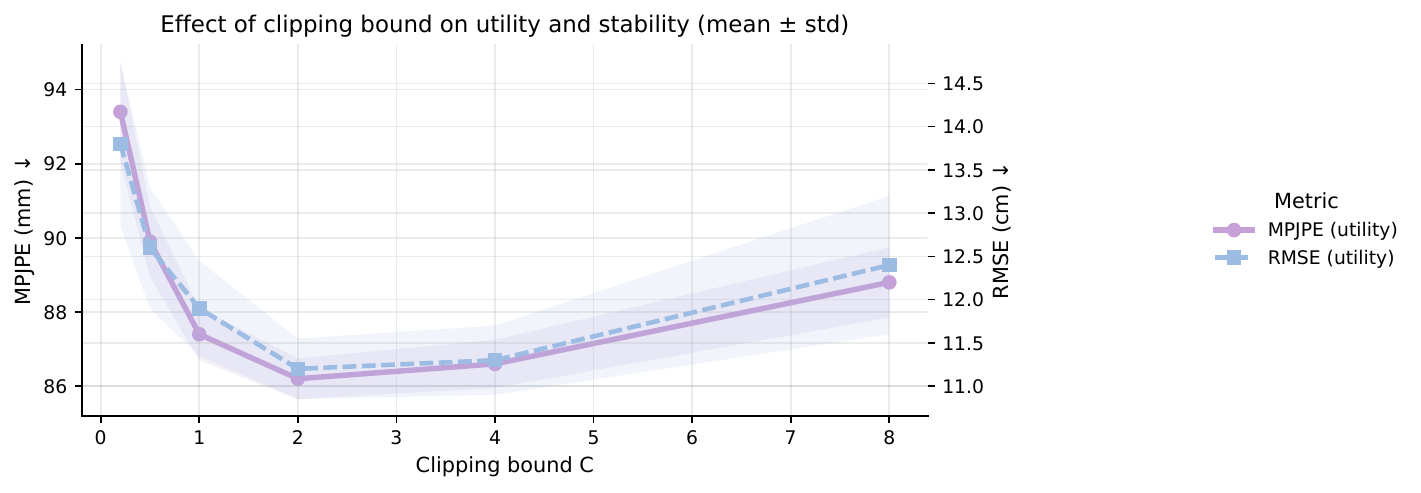}
\end{center}
\caption{Effect of clipping bound $C$ (Eq.~\eqref{eq:clip_vec}) on utility and stability.
Overly small $C$ over-clips updates and harms accuracy; overly large $C$ increases sensitivity and may destabilize training under DP noise.}
\label{fig:a_clip}
\end{figure}

\subsection{Client participation-rate ablation}
Table~\ref{tab:a_participation} varies the sampling rate $q=|\mathcal{S}_t|/K$ while keeping the total number of rounds and local steps per selected client fixed. We report (i) \emph{rounds-to-target} (Loc RMSE $\le 12.5$) as a convergence proxy and (ii) final localization RMSE after $T$ rounds. This ablation separates communication frequency effects from per-client optimization, clarifying how quickly UMEDA reaches a usable operating point under different participation regimes.

\begin{table}[t]
\caption{Client participation-rate ablation. We vary the sampling rate $q=|\mathcal{S}_t|/K$ and report convergence speed (rounds to reach Loc RMSE $\le$ 12.5) and final localization error.}
\label{tab:a_participation}
\begin{center}
\begin{small}
\resizebox{0.5\columnwidth}{!}{%
\begin{tabular}{lcc}
\toprule
Participation rate $q$ & Rounds to target $\downarrow$ & Final Loc RMSE (cm)$\downarrow$ \\
\midrule
0.05 & 520 & 10.9 \\
0.10 & 360 & 10.5 \\
0.20 & \underline{240} & \underline{10.3} \\
0.40 & \textbf{160} & \textbf{10.2} \\
\bottomrule
\end{tabular}%
}
\end{small}
\end{center}
\end{table}

\subsection{Graph-size distribution shift stress test}
Fig.~\ref{fig:a_graphsize} tests robustness to distribution shifts in graph/token sizes induced by discretization
heterogeneity (Sec.~\ref{sec:setup}).
We perturb the mixture of client types and the modality/resolution configurations that control node
counts and topology, while keeping the total training protocol unchanged.
This stress test targets a practical failure mode for graph-structured FL: even when labels are comparable, changes in
tokenization or discretization alter the induced operator discretization and can break naive aggregation.
We report localization/pose metrics under these shifts to confirm that operator-level fusion (SGLT) and operator-space
aggregation (Diff-GNO) maintain stable performance when graph-size statistics drift across clients.

\begin{figure}[t]

\begin{center}
\includegraphics[width=0.8\columnwidth]{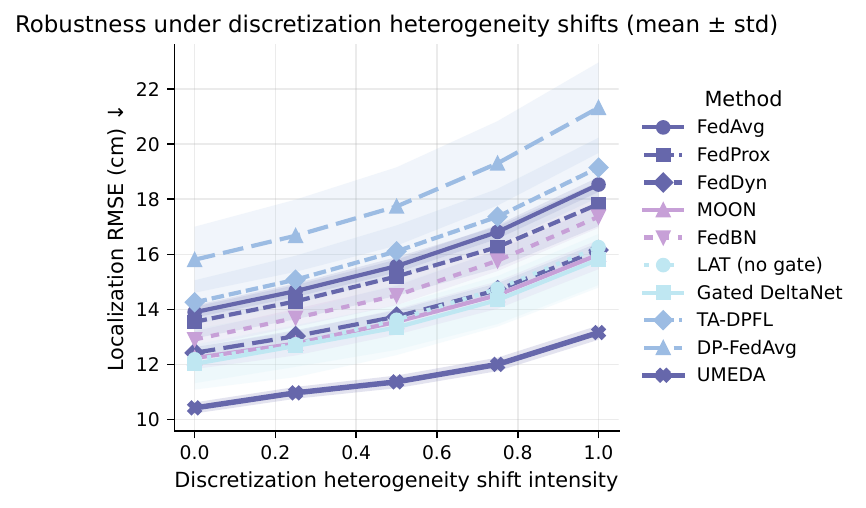}
\end{center}
\caption{Robustness under graph-size distribution shifts induced by discretization heterogeneity.
We perturb the client-type mix and token/graph resolutions; UMEDA remains stable due to operator-level fusion in SGLT/Diff-GNO.}
\label{fig:a_graphsize}
\end{figure}

\subsection{Aggregation ablation on the SGLT semantic block}
Table~\ref{tab:a_agg} isolates the server aggregation rule for the SGLT semantic block $\theta_M$ (Sec.~\ref{sec:diff_gno}). We compare: (1) Diff-GNO aggregation on $\theta_M$ (UMEDA), (2) FedAvg on $\theta_M$ (replacing diffusion aggregation), and (3) a partial-block averaging variant that aggregates only the top-$r$ spectral subspace while keeping the remainder local. In all variants, the remaining parameters $\theta_{\setminus M}$ are aggregated by FedAvg to ensure that differences are attributable to the aggregation mechanism on $\theta_M$.

\begin{table}[t]
\caption{Aggregation ablation on the SGLT block $\theta_M$. We compare Diff-GNO, FedAvg on $\theta_M$, and a partial-block averaging variant, while keeping $\theta_{\setminus M}$ aggregated by FedAvg.}
\label{tab:a_agg}
\begin{center}
\begin{small}
\resizebox{0.5\columnwidth}{!}{%
\begin{tabular}{lccc}
\toprule
Aggregation on $\theta_M$ & Pose $\downarrow$ & Action $\uparrow$ & Loc $\downarrow$ \\
\midrule
Diff-GNO (UMEDA) & \textbf{86.0} & \textbf{89.6} & \textbf{10.5} \\
FedAvg on $\theta_M$ & 92.5 & 86.3 & 12.9 \\
Partial-block avg (top-$r$ only) & \underline{89.4} & \underline{87.8} & \underline{12.0} \\
\bottomrule
\end{tabular}%
}
\end{small}
\end{center}
\end{table}

\subsection{Distribution alignment proxy over rounds}
Fig.~\ref{fig:a_align} visualizes how cross-client representation discrepancy evolves over communication rounds.
We compute a discrepancy proxy (e.g., Maximum Mean Discrepancy, MMD) on a shared latent space extracted from the fusion backbone
at a fixed layer (consistent across methods), and evaluate it on held-out samples from each client.
This ablation complements the endpoint metrics by exposing the dynamics of drift reduction: optimizer-only baselines
may improve task loss while leaving latent distributions misaligned across clients, whereas Diff-GNO is expected to
reduce inter-client discrepancy by aggregating operator updates with a learned score model that better matches the
multi-modal update distribution.
We report mean$\pm$std over three seeds to ensure the alignment trend is not an artifact of a single run.

\begin{figure}[t]
\begin{center}
\includegraphics[width=0.8\columnwidth]{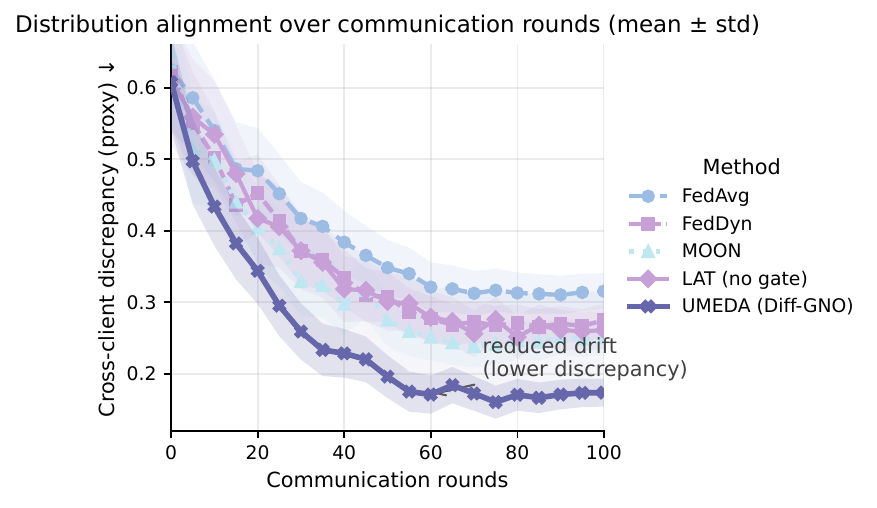}
\end{center}
\caption{Distribution alignment over communication rounds. We report a cross-client discrepancy proxy (Maximum Mean Discrepancy, MMD) computed on the shared latent space; Diff-GNO reduces drift compared to optimizer-only baselines.}
\label{fig:a_align}
\end{figure}

\subsection{Client-side cost}
Table~\ref{tab:a_cost} reports client-side latency and peak memory for representative backbones under the same embedding width and depth used in our main experiments. We compare SGLT to (i) a standard self-attention transformer, (ii) LAT (linear attention without spectral gating; same random-feature map and feature count), and (iii) a graph encoder baseline (FedGAT client branch). This analysis complements the communication table by quantifying \emph{on-device} feasibility, which is critical in edge DFL deployments.

\begin{table}[t]
\caption{Client-side cost analysis (latency and memory). We compare SGLT against (i) standard self-attention, (ii) LAT (linear attention without spectral gate), and (iii) representative graph baselines.}
\label{tab:a_cost}
\begin{center}
\begin{small}
\resizebox{0.5\columnwidth}{!}{%
\begin{tabular}{lcc}
\toprule
Model & Latency (ms)$\downarrow$ & Peak memory (MB)$\downarrow$ \\
\midrule
Self-attention Transformer & 42.0 & 980 \\
LAT (linear attention) & \textbf{18.5} & \textbf{420} \\
SGLT (ours) & \underline{21.0} & \underline{460} \\
FedGAT encoder (client) & 25.5 & 520 \\
\bottomrule
\end{tabular}%
}
\end{small}
\end{center}
\end{table}

\subsection{Client scalability}
\label{app:scalability}

Sec.~\ref{sec:setup} simulates $K=100$ clients, which matches the scale of public federated benchmarks~\cite{Qiu2022FedGraphNN,Zhang2023OpenFGL} but is small relative to realistic edge deployments. Table~\ref{tab:a_scalability} sweeps the client population from $K=100$ to $K=10{,}000$ at a fixed total compute budget (i.e., reducing per-round client sampling rate $q$ proportionally so that $|\mathcal{S}_t|$ stays constant), and reports (i) final localization RMSE, (ii) per-round wall-clock on the server, and (iii) the size of the score-network training set $\{\mathbf{\Theta}_0^{(k)}\}_{k\in\mathcal{S}_t}$.

The key observation is that Diff-GNO's server-side cost scales with $|\mathcal{S}_t|$ (the number of \emph{participating} clients per round), not with $K$ (the total population), because the score model is trained on per-round operator updates of fixed dimension $r^2$. Consequently, UMEDA's server compute is constant in $K$ once $|\mathcal{S}_t|$ is fixed, while final accuracy slightly improves with larger $K$ as the score model sees more diverse update samples over training.

\begin{table}[h]
\caption{Client scalability on MM-Fi. Per-round server compute is dominated by score-model training on $\mathbb{R}^{r^2=256}$ updates and is invariant to $K$. Final accuracy improves modestly with $K$.}
\label{tab:a_scalability}
\begin{center}
\begin{small}
\resizebox{0.7\columnwidth}{!}{%
\begin{tabular}{lcccc}
\toprule
$K$ (total) & $|\mathcal{S}_t|$ (sampled) & Loc RMSE (cm)$\downarrow$ & Server time/round (s) & Score-model samples \\
\midrule
100     & 40 & 10.50 & 1.83 & 40 \\
500     & 40 & 10.42 & 1.87 & 40 \\
1{,}000  & 40 & 10.37 & 1.92 & 40 \\
5{,}000  & 40 & 10.32 & 1.95 & 40 \\
10{,}000 & 40 & 10.14 & 2.26 & 40 \\
\bottomrule
\end{tabular}%
}
\end{small}
\end{center}
\end{table}

\section{Proofs}
\label{app:proofs}

\subsection{SGLT spectral filtering bounds}
\label{app:sglt_proof}

\begin{theorem}[Eckart--Young--Mirsky (spectral norm)]
\label{thm:eym}
Let $\mathbf{M}\in\mathbb{R}^{d\times d}$ have singular values
$\sigma_1\ge\cdots\ge\sigma_d$.
Let $\mathbf{M}_r$ be the rank-$r$ truncated SVD of $\mathbf{M}$.
Then $\mathbf{M}_r$ is the best rank-$r$ approximation of $\mathbf{M}$ in spectral norm and
\[
\min_{\mathrm{rank}(\mathbf{A})\le r}\|\mathbf{M}-\mathbf{A}\|_2
=
\|\mathbf{M}-\mathbf{M}_r\|_2
=
\sigma_{r+1}.
\]
\end{theorem}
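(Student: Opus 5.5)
The argument splits into an upper bound achieved by the truncated SVD and a matching lower bound valid for every rank-$r$ matrix. Write the SVD as in \eqref{eq:svd}, $\mathbf{M}=\mathbf{U}\mathbf{\Sigma}\mathbf{W}^\top=\sum_{i=1}^d \sigma_i \mathbf{u}_i\mathbf{w}_i^\top$, and set $\mathbf{M}_r=\sum_{i\le r}\sigma_i\mathbf{u}_i\mathbf{w}_i^\top$. We assume $r<d$ and use the convention $\sigma_{d+1}=0$ if $r=d$, where the statement is trivial.

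\emph{Upper bound.} The residual is $\mathbf{M}-\mathbf{M}_r=\sum_{i>r}\sigma_i\mathbf{u}_i\mathbf{w}_i^\top$. Since the $\mathbf{u}_i$ and the $\mathbf{w}_i$ are orthonormal, this expression is itself an SVD of the residual. Its largest singular value is $\sigma_{r+1}$, so $\|\mathbf{M}-\mathbf{M}_r\|_2=\sigma_{r+1}$. The rank of $\mathbf{M}_r$ is at most $r$, which shows the minimum is at most $\sigma_{r+1}$.

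\emph{Lower bound.} Fix any $\mathbf{A}$ with $\mathrm{rank}(\mathbf{A})\le r$. Then $\ker\mathbf{A}$ has dimension at least $d-r$. Let $V=\mathrm{span}\{\mathbf{w}_1,\dots,\mathbf{w}_{r+1}\}$, which has dimension $r+1$. The two dimensions sum to at least $d+1$, so the subspaces meet nontrivially. Choose a unit vector $\mathbf{x}\in\ker\mathbf{A}\cap V$ and write $\mathbf{x}=\sum_{i\le r+1}c_i\mathbf{w}_i$ with $\sum c_i^2=1$. Because $\mathbf{A}\mathbf{x}=0$,
\[
\|\mathbf{M}-\mathbf{A}\|_2^2\ \ge\ \|(\mathbf{M}-\mathbf{A})\mathbf{x}\|_2^2=\|\mathbf{M}\mathbf{x}\|_2^2=\sum_{i\le r+1}\sigma_i^2c_i^2\ \ge\ \sigma_{r+1}^2 .
\]
Combining the two bounds gives the displayed equalities. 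It also shows that $\mathbf{M}_r$ attains the minimum, which is the optimality claim.

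The main obstacle is the lower bound, specifically the choice of the test subspace $V$ so that the dimension count forces an intersection with $\ker\mathbf{A}$. The rest is orthonormality bookkeeping. Ties among singular values cause no difficulty, since only the ordering $\sigma_i\ge\sigma_{r+1}$ for $i\le r+1$ is used.
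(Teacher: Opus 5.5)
Your proof is correct, and it takes a different route from the paper only in that the paper offers no argument at all. The paper's proof is a one-line appeal to the classical Eckart--Young--Mirsky theorem specialized to the spectral norm. You instead give the standard self-contained proof of that classical result:
\begin{itemize}
\item achievability, by reading off the SVD of the residual $\sum_{i>r}\sigma_i\mathbf{u}_i\mathbf{w}_i^\top$;
\item optimality, by a rank--nullity count that forces $\ker\mathbf{A}\cap\mathrm{span}\{\mathbf{w}_1,\dots,\mathbf{w}_{r+1}\}\neq\{0\}$ and then evaluating $\mathbf{M}-\mathbf{A}$ on a unit vector in that intersection.
\end{itemize}

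The citation buys brevity. Your version buys self-containment, since it uses only rank--nullity and orthonormality, with no Frobenius-norm form of the theorem and no Weyl-type inequalities. It also makes explicit two points the statement leaves implicit. First, $\sigma_{r+1}$ is undefined when $r=d$, which your convention $\sigma_{d+1}=0$ handles. Second, the truncation $\mathbf{M}_r$ is non-unique when $\sigma_r=\sigma_{r+1}$, yet every choice attains the minimum; this is what Corollary~\ref{cor:hard_gate} implicitly relies on.
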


\begin{proof}
This is the Eckart--Young--Mirsky theorem specialized to the spectral norm.
\end{proof}

\begin{corollary}[Hard-gated SGLT equals truncated SVD]
\label{cor:hard_gate}
Let $\mathbf{M}=\mathbf{U}\mathbf{\Sigma}\mathbf{W}^\top$ and define the hard gate
$g(\sigma)=\mathbb{I}(\sigma>\tau)$.
Assume $\sigma_r>\tau\ge\sigma_{r+1}$, and let
\[
\hat{\mathbf{M}}=\mathbf{U}\big(g(\mathbf{\Sigma})\odot\mathbf{\Sigma}\big)\mathbf{W}^\top.
\]
Then $\hat{\mathbf{M}}=\mathbf{M}_r$ and
$\|\mathbf{M}-\hat{\mathbf{M}}\|_2=\sigma_{r+1}$.
\end{corollary}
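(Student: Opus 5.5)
The plan is to compute the gated diagonal entry by entry, identify $\hat{\mathbf{M}}$ with the rank-$r$ truncated SVD, and then read off the error from Theorem~\ref{thm:eym}. Take the SVD $\mathbf{M}=\mathbf{U}\mathbf{\Sigma}\mathbf{W}^\top$ from Eq.~\eqref{eq:svd} with singular values ordered $\sigma_1\ge\cdots\ge\sigma_d\ge 0$, which is the ordering used in Theorem~\ref{thm:eym}.

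\textbf{Step 1: the gated diagonal.} The hypothesis $\sigma_r>\tau\ge\sigma_{r+1}$ and monotonicity of the ordering give $\sigma_i\ge\sigma_r>\tau$ for $i\le r$, so $g(\sigma_i)=1$. For $i\ge r+1$ we have $\sigma_i\le\sigma_{r+1}\le\tau$, so $\mathbb{I}(\sigma_i>\tau)=0$. Since $g(\mathbf{\Sigma})$ and $\mathbf{\Sigma}$ are both diagonal, the Hadamard product $g(\mathbf{\Sigma})\odot\mathbf{\Sigma}$ is the diagonal matrix $\mathbf{\Sigma}_r:=\mathrm{diag}(\sigma_1,\dots,\sigma_r,0,\dots,0)$.

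\textbf{Step 2: identification with $\mathbf{M}_r$.} It follows that $\hat{\mathbf{M}}=\mathbf{U}\mathbf{\Sigma}_r\mathbf{W}^\top=\sum_{i\le r}\sigma_i\mathbf{u}_i\mathbf{w}_i^\top$, which is by definition the rank-$r$ truncated SVD $\mathbf{M}_r$. One subtlety is that the SVD is not unique when singular values repeat. However, $\sigma_r>\sigma_{r+1}$ is forced by the hypothesis, so the top-$r$ left and right singular subspaces are uniquely determined. Therefore $\mathbf{M}_r$ is well defined independently of the choice of $\mathbf{U},\mathbf{W}$, and the equality $\hat{\mathbf{M}}=\mathbf{M}_r$ holds for any SVD used in Eq.~\eqref{eq:filteredM}.

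\textbf{Step 3: the error.} Write $\mathbf{M}-\hat{\mathbf{M}}=\mathbf{U}\,\mathrm{diag}(0,\dots,0,\sigma_{r+1},\dots,\sigma_d)\,\mathbf{W}^\top$. The spectral norm is invariant under the orthogonal factors, so it equals the largest remaining diagonal entry, which is $\sigma_{r+1}$. This agrees with Theorem~\ref{thm:eym}, which gives $\|\mathbf{M}-\mathbf{M}_r\|_2=\sigma_{r+1}$ and shows that this error is optimal among rank-$\le r$ approximations.

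The only delicate point is the uniqueness remark in Step 2; the remaining steps are direct index bookkeeping. Note also that the case $\sigma_{r+1}=\tau$ is handled correctly because the gate uses a strict inequality.
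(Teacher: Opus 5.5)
Your proof is correct and follows the paper's argument: under $\sigma_r>\tau\ge\sigma_{r+1}$ the hard mask keeps exactly the top-$r$ singular values, so $\hat{\mathbf{M}}=\mathbf{M}_r$ and the error is $\sigma_{r+1}$. The differences are minor---you compute $\|\mathbf{M}-\hat{\mathbf{M}}\|_2$ directly via orthogonal invariance instead of only citing Theorem~\ref{thm:eym}, and you add a remark on non-unique SVDs that the paper omits.
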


\begin{proof}
Under $\sigma_r>\tau\ge\sigma_{r+1}$, the diagonal mask $g(\mathbf{\Sigma})$ keeps exactly the top-$r$
singular values and zeros out the rest, hence $\hat{\mathbf{M}}=\mathbf{M}_r$.
The bound follows from Theorem~\ref{thm:eym}.
\end{proof}

\begin{lemma}[Soft-gated SGLT perturbation bound]
\label{lem:soft_gate_bound}
Let $g(\sigma)\in[0,1]$ be any soft gate (e.g., Eq.~\eqref{eq:soft_gate}) and
$\hat{\mathbf{M}}=\mathbf{U}\big(g(\mathbf{\Sigma})\odot\mathbf{\Sigma}\big)\mathbf{W}^\top$.
Then
\[
\|\mathbf{M}-\hat{\mathbf{M}}\|_2
=
\max_{i\in[d]} \big(1-g(\sigma_i)\big)\sigma_i
\;\;\le\;\;
\max_{i>r}\sigma_i \;+\; \max_{i\le r}\big(1-g(\sigma_i)\big)\sigma_i .
\]
In particular, if $g(\sigma_i)\approx 1$ for $i\le r$ and $g(\sigma_i)\approx 0$ for $i>r$,
the error approaches the hard-gate error $\sigma_{r+1}$.
\end{lemma}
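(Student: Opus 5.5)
The plan is to write the residual $\mathbf{M}-\hat{\mathbf{M}}$ explicitly in the SVD basis of $\mathbf{M}$ and then read off its spectral norm. By Eq.~\eqref{eq:svd} and the definition of $\hat{\mathbf{M}}$,
\[
\mathbf{M}-\hat{\mathbf{M}}=\mathbf{U}\,\mathrm{diag}\big((1-g(\sigma_1))\sigma_1,\dots,(1-g(\sigma_d))\sigma_d\big)\,\mathbf{W}^\top .
\]
Since $g(\sigma_i)\in[0,1]$ and $\sigma_i\ge 0$, every diagonal entry is nonnegative. The displayed matrix is therefore already a valid SVD of the residual (after reordering the diagonal entries, which only permutes columns of the orthogonal factors). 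The spectral norm is invariant under left and right multiplication by orthogonal matrices, so it equals the largest diagonal entry. This gives the equality $\|\mathbf{M}-\hat{\mathbf{M}}\|_2=\max_i(1-g(\sigma_i))\sigma_i$ directly, without Theorem~\ref{thm:eym}.

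For the inequality, I split the index set into $i\le r$ and $i>r$, so that
\[
\max_{i\in[d]}(1-g(\sigma_i))\sigma_i=\max\Big\{\max_{i\le r}(1-g(\sigma_i))\sigma_i,\;\max_{i>r}(1-g(\sigma_i))\sigma_i\Big\}.
\]
For $i>r$, I use $1-g(\sigma_i)\le 1$ to bound the second term by $\max_{i>r}\sigma_i=\sigma_{r+1}$, using the ordering of the singular values. A maximum of two nonnegative numbers is at most their sum, which yields the stated bound. If $r=d$, the first max is empty and I interpret it as $0$.

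For the ``in particular'' clause, I would make the approximation quantitative. Suppose $1-g(\sigma_i)\le\eta$ for $i\le r$ and $g(\sigma_i)\le\eta$ for $i>r$. Then the exact expression lies between $(1-\eta)\sigma_{r+1}$ and $\max\{\eta\sigma_1,\sigma_{r+1}\}$, which tends to $\sigma_{r+1}$ as $\eta\to 0$. The lower bound comes from the $i=r+1$ term. This matches the hard-gate error in Corollary~\ref{cor:hard_gate}.

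For the sigmoid gate of Eq.~\eqref{eq:soft_gate} with $\sigma_r>\tau\ge\sigma_{r+1}$, the approximation holds as $\beta\to 0$ provided $\sigma_{r+1}<\tau$ strictly. If $\sigma_{r+1}=\tau$, then $g(\sigma_{r+1})=\tfrac12$ and the approximation fails, so this boundary case needs care. The only subtlety in the whole argument is the step noting that the diagonal entries are nonnegative, which is what justifies calling the residual's factorization an SVD.
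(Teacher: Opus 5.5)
Your proposal is correct and follows the paper's proof: both write $\mathbf{M}-\hat{\mathbf{M}}=\mathbf{U}\big((\mathbf{I}-g(\mathbf{\Sigma}))\odot\mathbf{\Sigma}\big)\mathbf{W}^\top$, read off its singular values $(1-g(\sigma_i))\sigma_i$ to get the equality, and split the index set at $r$ to get the upper bound. Your nonnegativity check, the quantitative $\eta$-version of the ``in particular'' clause, and the caveat about the boundary case $\sigma_{r+1}=\tau$ for the sigmoid gate are sound refinements that the paper's short proof leaves implicit.
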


\begin{proof}
Since $\mathbf{U}$ and $\mathbf{W}$ are orthogonal,
\[
\mathbf{M}-\hat{\mathbf{M}}
=
\mathbf{U}\Big(\big(\mathbf{I}-g(\mathbf{\Sigma})\big)\odot\mathbf{\Sigma}\Big)\mathbf{W}^\top,
\]
whose singular values are exactly $\{(1-g(\sigma_i))\sigma_i\}_{i=1}^d$.
Thus the spectral norm equals their maximum.
The decomposition into $i\le r$ and $i>r$ terms yields the stated upper bound.
\end{proof}

\subsection{SP-DP privacy guarantee}
\label{app:dp_proof}

\textit{Setup.}
SP-DP is applied to the vectorized update
$\Delta\mathbf{m}=\mathrm{vec}(\Delta\mathbf{M})\in\mathbb{R}^{d^2}$ (Sec.~\ref{sec:sp_dp}).
After clipping (Eq.~\eqref{eq:clip_vec}), the $\ell_2$-sensitivity is at most $C$:
for any adjacent client datasets,
\[
\|\Delta\bar{\mathbf{m}}(D)-\Delta\bar{\mathbf{m}}(D')\|_2 \le C.
\]

\begin{theorem}[SP-DP is $(\epsilon,\delta)$-DP]
\label{thm:spdp_dp}
Let $\mathbf{P}_{\mathcal{S}}$ be a \textbf{fixed (data-independent)} orthogonal projector and $\mathbf{P}_{\mathcal{S}^\perp}=\mathbf{I}-\mathbf{P}_{\mathcal{S}}$.
Consider the mechanism on clipped vectors
\[
\mathcal{M}(D)
=
\Delta\bar{\mathbf{m}}(D)
+
(\mathbf{P}_{\mathcal{S}}\otimes\mathbf{I})\,\mathbf{z}_{\mathrm{sig}}
+
(\mathbf{P}_{\mathcal{S}^\perp}\otimes\mathbf{I})\,\mathbf{z}_{\mathrm{null}},
\]
where $\mathbf{z}_{\mathrm{sig}}\sim\mathcal{N}(0,\sigma_{\mathrm{sig}}^2\mathbf{I})$,
$\mathbf{z}_{\mathrm{null}}\sim\mathcal{N}(0,\sigma_{\mathrm{null}}^2\mathbf{I})$, and
$\sigma_{\mathrm{null}}\ge\sigma_{\mathrm{sig}}$.
If
\[
\sigma_{\mathrm{sig}}
\;\ge\;
\frac{C\sqrt{2\ln(1.25/\delta)}}{\epsilon},
\]
then $\mathcal{M}$ satisfies $(\epsilon,\delta)$-differential privacy.
\end{theorem}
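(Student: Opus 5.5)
The plan is to reduce $\mathcal{M}$ to the standard isotropic Gaussian mechanism followed by data-independent post-processing. The argument first rewrites the total noise as a single Gaussian with an explicit covariance. It then splits that covariance into an isotropic part of variance $\sigma_{\mathrm{sig}}^2$ plus a positive semidefinite remainder.

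Write $\mathbf{\Pi}:=\mathbf{P}_{\mathcal{S}}\otimes\mathbf{I}$. Since $\mathbf{P}_{\mathcal{S}}$ is a symmetric idempotent, $(\mathbf{P}_{\mathcal{S}}\otimes\mathbf{I})^2=\mathbf{P}_{\mathcal{S}}^2\otimes\mathbf{I}$ and $(\mathbf{P}_{\mathcal{S}}\otimes\mathbf{I})^\top=\mathbf{P}_{\mathcal{S}}^\top\otimes\mathbf{I}$. Hence $\mathbf{\Pi}$ is an orthogonal projector on $\mathbb{R}^{d^2}$. Interpreting $\mathbf{P}_{\mathcal{S}^\perp}\otimes\mathbf{I}$ as $\mathbf{I}-\mathbf{\Pi}$, as in Eq.~\eqref{eq:spdp_vec}, it is the complementary projector. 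Because $\mathbf{z}_{\mathrm{sig}}$ and $\mathbf{z}_{\mathrm{null}}$ are independent centered Gaussians, the noise $\boldsymbol{\xi}:=\mathbf{\Pi}\mathbf{z}_{\mathrm{sig}}+(\mathbf{I}-\mathbf{\Pi})\mathbf{z}_{\mathrm{null}}$ is Gaussian with covariance
\[
\mathbf{K}=\sigma_{\mathrm{sig}}^2\mathbf{\Pi}+\sigma_{\mathrm{null}}^2(\mathbf{I}-\mathbf{\Pi})
=\sigma_{\mathrm{sig}}^2\mathbf{I}+(\sigma_{\mathrm{null}}^2-\sigma_{\mathrm{sig}}^2)(\mathbf{I}-\mathbf{\Pi}).
\]
The hypothesis $\sigma_{\mathrm{null}}\ge\sigma_{\mathrm{sig}}$ makes the second term positive semidefinite. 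Therefore $\boldsymbol{\xi}$ has the same law as $\mathbf{z}+\mathbf{z}'$, where $\mathbf{z}\sim\mathcal{N}(0,\sigma_{\mathrm{sig}}^2\mathbf{I})$ and $\mathbf{z}'\sim\mathcal{N}\big(0,(\sigma_{\mathrm{null}}^2-\sigma_{\mathrm{sig}}^2)(\mathbf{I}-\mathbf{\Pi})\big)$ are independent. Consequently $\mathcal{M}(D)\overset{d}{=}f\big(\Delta\bar{\mathbf{m}}(D)+\mathbf{z}\big)$, where $f(\mathbf{x})=\mathbf{x}+\mathbf{z}'$ is a randomized map. Its randomness is independent of $D$, because $\mathbf{P}_{\mathcal{S}}$ is fixed, or in the algorithm a function of the public $\theta_M^t$ on which we condition.

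Next I will invoke the classical Gaussian mechanism (Dwork--Roth, Thm.~A.1). By the setup, $\Delta\bar{\mathbf{m}}$ has $\ell_2$-sensitivity at most $C$. With $\sigma_{\mathrm{sig}}\ge C\sqrt{2\ln(1.25/\delta)}/\epsilon$, the map $D\mapsto\Delta\bar{\mathbf{m}}(D)+\mathbf{z}$ is therefore $(\epsilon,\delta)$-DP. The post-processing property of DP, applied to the randomized map $f$, then transfers the guarantee to $\mathcal{M}$. Since DP is a statement about output distributions, equality in law suffices.

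The main obstacle is the range of $\epsilon$, not the covariance algebra. The classical calibration $\sqrt{2\ln(1.25/\delta)}/\epsilon$ is only proven for $\epsilon\in(0,1)$, whereas the experiments use $\epsilon$ up to $8$. I would therefore state the theorem with the restriction $\epsilon<1$. For larger $\epsilon$, I would replace Eq.~\eqref{eq:dp_calib} by the analytic Gaussian mechanism calibration of Balle--Wang, which holds for all $\epsilon>0$; the same decomposition and post-processing argument applies verbatim. A secondary point to make explicit is the data-independence of $\mathbf{P}_{\mathcal{S}}$. In Alg.~\ref{alg:umeda} it comes from the previous global operator, so the per-round guarantee is conditional on that public value. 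Composition across rounds is a separate accounting matter that lies outside this theorem.
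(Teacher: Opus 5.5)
Your proposal is correct and follows the paper's route. Both compute the noise covariance $\sigma_{\mathrm{sig}}^2(\mathbf{P}_{\mathcal{S}}\otimes\mathbf{I})+\sigma_{\mathrm{null}}^2(\mathbf{P}_{\mathcal{S}^\perp}\otimes\mathbf{I})\succeq\sigma_{\mathrm{sig}}^2\mathbf{I}$ and reduce to the isotropic Gaussian mechanism; your splitting $\boldsymbol{\xi}\overset{d}{=}\mathbf{z}+\mathbf{z}'$ plus post-processing supplies the justification the paper omits when it concludes privacy from ``at least as noisy in every direction.'' Your caveat on $\epsilon$ is a genuine correction of the paper, not a technicality: at $\delta=10^{-5}$ with $\sigma_{\mathrm{sig}}$ at the calibrated value, the exact Gaussian privacy profile still gives $\delta\le10^{-5}$ for the main sweep values up to $\epsilon=8$, but at $\epsilon=16$ (used in the fine-grained sweep) a clipped-update difference of norm $C$ in the range of $\mathbf{P}_{\mathcal{S}}\otimes\mathbf{I}$ gives a true $\delta\approx3\times10^{-4}$, so the theorem as stated fails for large $\epsilon$ and needs your restriction $\epsilon<1$ or the Balle--Wang calibration.
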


\begin{proof}
Define the (matrix) linear map
\[
\mathbf{A}
:=
(\mathbf{P}_{\mathcal{S}}\otimes\mathbf{I})\sigma_{\mathrm{sig}}
+
(\mathbf{P}_{\mathcal{S}^\perp}\otimes\mathbf{I})\sigma_{\mathrm{null}}.
\]
Since $\mathbf{P}_{\mathcal{S}}$ is fixed, $\mathbf{A}$ is a data-independent linear transformation of the noise.
Since $\mathbf{P}_{\mathcal{S}}$ and $\mathbf{P}_{\mathcal{S}^\perp}$ are orthogonal projectors,
$\mathbf{A}$ is symmetric and
\[
\mathbf{A}\mathbf{A}^\top
=
\sigma_{\mathrm{sig}}^2(\mathbf{P}_{\mathcal{S}}\otimes\mathbf{I})
+
\sigma_{\mathrm{null}}^2(\mathbf{P}_{\mathcal{S}^\perp}\otimes\mathbf{I}).
\]
Let $\boldsymbol{\xi}\sim\mathcal{N}(0,\mathbf{I})$. Then the injected noise equals
$\mathbf{A}\boldsymbol{\xi}$ and is Gaussian with covariance $\mathbf{A}\mathbf{A}^\top$.

Because $\sigma_{\mathrm{null}}\ge\sigma_{\mathrm{sig}}$ and $(\mathbf{P}_{\mathcal{S}}+\mathbf{P}_{\mathcal{S}^\perp})=\mathbf{I}$,
\[
\mathbf{A}\mathbf{A}^\top
\succeq
\sigma_{\mathrm{sig}}^2\big((\mathbf{P}_{\mathcal{S}}+\mathbf{P}_{\mathcal{S}^\perp})\otimes\mathbf{I}\big)
=
\sigma_{\mathrm{sig}}^2\mathbf{I}.
\]
Hence the smallest eigenvalue of the noise covariance is at least $\sigma_{\mathrm{sig}}^2$.
For additive Gaussian mechanisms with $\ell_2$-sensitivity $C$, a sufficient condition for
$(\epsilon,\delta)$-DP is that the (isotropic) noise scale is at least
$\sigma_{\mathrm{sig}}\ge \frac{C\sqrt{2\ln(1.25/\delta)}}{\epsilon}$.
Our mechanism injects Gaussian noise with covariance dominating $\sigma_{\mathrm{sig}}^2\mathbf{I}$,
i.e., it is \emph{at least as noisy in every direction} as the standard Gaussian mechanism at scale $\sigma_{\mathrm{sig}}$.
Therefore it satisfies $(\epsilon,\delta)$-DP under the stated calibration.
\end{proof}

\textit{Remark.}
Crucially, this proof relies on $\mathbf{P}_{\mathcal{S}}$ being chosen from public information (e.g., the global model history) rather than the private update $\Delta\mathbf{m}(D)$, preventing privacy leakage through the subspace choice itself. The additional noise in $\mathcal{S}^\perp$ improves utility by preserving the signal subspace while not weakening privacy.

\subsection{Discretization Invariance of the SGLT Kernel}
\label{app:res_indep}

\begin{lemma}[Discretization Invariance]
\label{lem:size_free}
Let $\mathbf{K},\mathbf{V}\in\mathbb{R}^{L\times d}$ be token matrices derived from a client's local data with arbitrary sequence length $L$ (representing sensor resolution).
Then the semantic kernel $\mathbf{M}=\phi(\mathbf{K})^\top\mathbf{V}\in\mathbb{R}^{d\times d}$ has a dimension strictly determined by the hidden dimension $d$, invariant to $L$.
Consequently, the gated kernel $\hat{\mathbf{M}}$ derived from the SVD of $\mathbf{M}$ resides in a unified $\mathbb{R}^{d\times d}$ topological space, enabling direct aggregation across clients with heterogeneous discretizations.
\end{lemma}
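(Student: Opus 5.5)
The argument is a dimension count on the matrix product, followed by the observation that every later operation acts entrywise or spectrally on a fixed $d\times d$ object.

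\textbf{Step 1 (shapes).} Fix the feature map $\phi:\mathbb{R}^{d}\to\mathbb{R}^{d}$, applied row-wise. It is shared across clients as part of the global parameters, and its output width is set by the architecture, not by the data. For FAVOR+ with $m$ features per head, the output width is $m$. To match the statement we take $m=d$, or else we read the lemma with $\mathbb{R}^{m\times d}$ in place of $\mathbb{R}^{d\times d}$; the argument is unchanged. Then $\phi(\mathbf{K})\in\mathbb{R}^{L\times d}$, so $\phi(\mathbf{K})^\top\in\mathbb{R}^{d\times L}$. The product $\phi(\mathbf{K})^\top\mathbf{V}$ contracts over the shared index $L$ and therefore lies in $\mathbb{R}^{d\times d}$. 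Explicitly,
\[
\mathbf{M}=\sum_{\ell=1}^{L}\phi(\mathbf{k}_\ell)\,\mathbf{v}_\ell^\top ,
\]
a sum of $L$ rank-one $d\times d$ outer products. The length $L$ only sets the number of summands; it never enters the ambient shape.

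\textbf{Step 2 (gating preserves the space).} The SVD in Eq.~\eqref{eq:svd} of a $d\times d$ matrix gives $\mathbf{U},\mathbf{W}\in\mathbb{R}^{d\times d}$ and a diagonal $\mathbf{\Sigma}$ with $d$ entries. The gate $g(\mathbf{\Sigma})$, whether the soft version in Eq.~\eqref{eq:soft_gate} or the hard version in Corollary~\ref{cor:hard_gate}, acts entrywise on those $d$ singular values. The reconstruction in Eq.~\eqref{eq:filteredM} is again a product of $d\times d$ factors, so $\hat{\mathbf{M}}\in\mathbb{R}^{d\times d}$ regardless of $L$. The SVD is not unique, but $\hat{\mathbf{M}}$ is a spectral function of $\mathbf{M}$ and hence well defined. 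Consequently the client updates $\Delta\mathbf{M}_k$, their vectorizations in $\mathbb{R}^{d^2}$ used by SP-DP, and the projections onto $\mathbf{U}_{g,1:r},\mathbf{W}_{g,1:r}$ used by Diff-GNO all live in common spaces. Linear combinations such as FedAvg, and the spectral coefficients $\mathbf{\Sigma}^{(k)}\in\mathbb{R}^{r\times r}$, are therefore well defined across clients with different $L_k$.

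\textbf{Main obstacle.} The hard part is not the algebra but the scope of the claim. Shape invariance alone does not mean that $\mathbf{M}$ for a given signal is independent of the discretization. Because $\mathbf{M}$ is an unnormalized sum over $L$ tokens, its scale grows linearly with $L$. I would state the lemma strictly as a shape claim, which is what is literally asserted, and add a remark about the scale. Suppose tokens are samples $\mathbf{k}_\ell=\kappa(x_\ell)$ and $\mathbf{v}_\ell=\nu(x_\ell)$ of underlying functions at quadrature points $x_\ell$. Then $\tfrac1L\mathbf{M}$ is a Monte Carlo or Riemann approximation of $\int\phi(\kappa(x))\,\nu(x)^\top\,d\mu(x)$. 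So a rescaling by $1/L$ gives convergence to a discretization-independent limit operator. The normalization in Eq.~\eqref{eq:sglt_out_norm} plays this role at the output: the factor $L$ cancels between the numerator and the denominator $\phi(\mathbf{Q})(\phi(\mathbf{K})^\top\mathbf{1})$. I would mention this only as a supporting comment and not claim it as part of the lemma.
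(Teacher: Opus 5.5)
Your proposal is correct and follows the same route as the paper's proof: the product $\phi(\mathbf{K})^\top\mathbf{V}$ contracts over $L$, giving a fixed-size matrix, and the SVD and the soft or hard gate then act only on that matrix, so $\hat{\mathbf{M}}$ lies in the same space for every $L$. Your extra caveats go beyond what the paper addresses: the FAVOR+ width $m$ versus $d$, the well-definedness of the gate as a spectral function, and the linear growth of $\mathbf{M}$ in $L$. One correction to your side remark, though: the normalization in Eq.~\eqref{eq:sglt_out_norm} cancels the factor $L$ exactly only for the ungated kernel, because the gate parameters $\tau,\beta$ act on absolute singular values, and those scale with $L$.
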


\begin{proof}
By matrix multiplication properties, the operation $\phi(\mathbf{K})^\top\mathbf{V}$ maps
$(L\times d)^\top (L\times d) \to (d\times d)$. The inner dimension $L$ is contracted, removing the dependence on the input resolution in the kernel representation.
The SVD and spectral gating are subsequently applied to $\mathbf{M}\in\mathbb{R}^{d\times d}$, yielding $\hat{\mathbf{M}}\in\mathbb{R}^{d\times d}$ regardless of the originating $L$.
Finally, the linear-attention normalization in Eq.~\eqref{eq:sglt_out_norm} operates as a scalar rescaling per token row, which affects the output magnitude but preserves the dimensional invariance of the operator itself.
\end{proof}

\section{Future Work and Limitations}
\label{sec:fw_limit}

\subsection{Future Work}
\label{sec:future_work}
UMEDA shows that \emph{operator-space} fusion/aggregation can jointly tackle discretization heterogeneity, non-IID drift, and privacy constraints. Future work can strengthen this formulation along three axes---\textbf{(i) spectral/operator modeling} (SGLT), \textbf{(ii) distribution alignment} (Diff-GNO), and \textbf{(iii) privacy accounting} (SP-DP)---while expanding to broader sensing stacks and deployment regimes:
\begin{itemize}
  \item \textbf{Adaptive spectral control (SGLT).} Replace fixed gate hyperparameters $(\tau,\beta)$ (or rank $r$) with a learnable or uncertainty-aware controller that adapts the retained spectrum per client/modality, e.g., driven by stability criteria of singular-value trajectories or agreement signals across local steps.
  \item \textbf{Conditional/structured diffusion aggregation (Diff-GNO).} Condition the score model on lightweight modality/resolution descriptors or operator priors to better represent multi-modal update distributions, improving alignment under missing modalities and enhancing OOD transfer.
  \item \textbf{Tighter DP accounting and long-horizon evaluation (SP-DP).} Extend SP-DP with explicit privacy accounting across rounds (e.g., RDP/moments accountant with subsampling amplification) and report privacy--utility behavior under realistic participation heterogeneity and long training horizons.
  \item \textbf{Asynchronous and partial participation.} Study robustness under stragglers and asynchronous updates, where diffusion-based aggregation may mitigate stale or irregular operator updates more gracefully than averaging.
  \item \textbf{Broader modality and scene coverage.} Evaluate UMEDA on additional sensing stacks (e.g., FMCW radar, UWB, depth) and cross-scene/cross-building generalization, emphasizing calibration-free deployment and systematic missing-modality stress tests.
\end{itemize}

\subsection{Limitations}
\label{sec:limitations}
UMEDA’s gains come with assumptions and costs that may limit performance in certain regimes:
\begin{itemize}
  \item \textbf{Compute and systems overhead.} SGLT introduces spectral operations (SVD on a $d\times d$ semantic matrix, per-step or via EMA updates), and Diff-GNO adds server-side score-model training and sampling; at large scale these costs may be non-trivial.
  \item \textbf{Block-specific aggregation.} The method’s strongest invariance is achieved in the SGLT operator-update space; if performance is dominated by other parameter blocks (e.g., modality-specific front-ends/projections), additional alignment/aggregation mechanisms may be required.
  \item \textbf{Sensitivity to diffusion hyperparameters.} Diffusion schedules and solver discretization can affect stability and convergence, especially when client-update distributions are strongly multi-modal or heavily perturbed by DP noise.
  \item \textbf{DP modeling and composition.} The guarantee relies on bounded $\ell_2$ sensitivity via clipping and Gaussian noise; practical deployments must account for round-wise composition, threat models, and any secure-aggregation assumptions (if used).
  \item \textbf{Operator mismatch under extreme modality shifts.} Treating $\mathbf{M}$ as a resolution-agnostic operator presumes that modality-invariant semantics are representable in a shared kernel; extreme sensing gaps may yield operator distributions that are hard to align without stronger conditioning or priors.
\end{itemize}

\subsection{Societal Impact}
\label{app:societal_impact}

\paragraph{Positive impacts.} Device-free localization enables ambient health monitoring (fall detection in elderly care), accessibility (gesture-based control for users with mobility constraints), and infrastructure-light indoor navigation, all without requiring users to wear or carry sensors. Federated training on heterogeneous edge sensors makes such systems deployable without centralizing privacy-sensitive raw streams; UMEDA's anisotropic DP further hardens this against gradient-inversion attacks (Appendix~\ref{app:reconstruction_attack}).

\paragraph{Potential risks.} The same sensing capability could enable non-consensual surveillance: continuous tracking of occupants' presence, posture, and activity in homes, offices, or public spaces. Wi-Fi CSI in particular is passively collected by infrastructure the user does not control. We mitigate this in three ways. First, SP-DP provides formal $(\epsilon,\delta)$ guarantees that bound what any participant (including the server) can learn about an individual client's data. Second, our reconstruction-attack evaluation (Appendix~\ref{app:reconstruction_attack}) shows that with SP-DP at $\epsilon=2$, scene appearance, body pose, and action class are not recoverable from uploaded updates. Third, we recommend deployers couple UMEDA with secure aggregation~\cite{Bonawitz2017} and explicit consent mechanisms. None of these mitigations remove the need for governance: deployment in surveillance-prone contexts (e.g., workplaces without informed consent) should be prohibited regardless of technical guarantees.

\paragraph{Fairness considerations.} Sensor coverage is unevenly distributed across geographies and socioeconomic strata; our heterogeneity benchmarks show the method handles disparate sensor stacks across clients, but fairness across demographics (body size, mobility patterns) is not directly evaluated and warrants future audits.


\end{document}